\newcolumntype{L}[1]{>{\raggedright\let\newline\\\arraybackslash\hspace{0pt}}m{#1}}
\newcolumntype{C}[1]{>{\centering\let\newline\\\arraybackslash\hspace{0pt}}m{#1}}
\newcolumntype{R}[1]{>{\raggedleft\let\newline\\\arraybackslash\hspace{0pt}}m{#1}}
\newcommand*\circled[1]{\tikz[baseline=(char.base)]{
            \node[shape=circle,draw,inner sep=0pt] (char) {#1};}}
\newtheorem{theorem}{Theorem}[]
\newtheorem{corollary}{Corollary}[]
\newtheorem{lemma}[theorem]{Lemma}
\newtheorem{assumption}{Assumption}
\newtheorem{remark}{Remark}
\theoremstyle{definition}
\newtheorem{definition}{Definition}[]
\begin{document}
\normalem

\title{Clustered Federated Learning: Model-Agnostic Distributed Multi-Task Optimization under Privacy Constraints }
\author{Felix Sattler, Klaus-Robert M{\"u}ller*,~\IEEEmembership{Member,~IEEE}, and Wojciech Samek*,~\IEEEmembership{Member,~IEEE}
\thanks{This work was supported by the German Ministry for Education and Research as Berlin Big Data Center (01IS14013A) and the Berlin Center for Machine Learning (01IS18037I). Partial funding by DFG is acknowledged (EXC 2046/1, project-ID: 390685689). This
work was also supported by the Information \& Communications Technology Planning \& Evaluation (IITP) grant funded by the Korea government (No. 2017-0-00451).}
\thanks{F. Sattler and W. Samek are with Fraunhofer Heinrich Hertz Institute, 10587 Berlin, Germany (e-mail: wojciech.samek@hhi.fraunhofer.de).}
\thanks{K.-R. M{\"u}ller is with the Technische Universit{\"a}t Berlin, 10587 Berlin, Germany, with the Max Planck Institute for Informatics, 66123 Saarbr{\"u}cken, Germany, and also with the Department of Brain and Cognitive Engineering, Korea University, Seoul 136-713, South Korea (e-mail: klaus-robert.mueller@tu-berlin.de).}}
\markboth{Sattler et al.\ -- Clustered Federated Learning: Model-Agnostic Distributed Multi-Task Optimization under Privacy Constraints }%
{Sattler et al.\ -- Clustered Federated Learning: Model-Agnostic Distributed Multi-Task Optimization under Privacy Constraints }

\maketitle

\begin{abstract}
Federated Learning (FL) is currently the most widely adopted framework for collaborative training of (deep) machine learning models under privacy constraints. 
Albeit it's popularity, it has been observed 
that Federated Learning yields suboptimal results if the local clients' data distributions diverge.
To address this issue, we present Clustered Federated Learning (CFL), a novel Federated Multi-Task Learning (FMTL) framework, which exploits geometric properties of the FL loss surface, to group the client population into clusters with jointly trainable data distributions.
In contrast to existing FMTL approaches, CFL does not require any modifications to the FL communication protocol to be made, is applicable to general non-convex objectives (in particular deep neural networks) and comes with strong mathematical guarantees on the clustering quality. CFL is flexible enough to handle client populations that vary over time and can be implemented in a privacy preserving way. As clustering is only performed after Federated Learning has converged to a stationary point, CFL can be viewed as a post-processing method that will always achieve greater or equal performance than conventional FL by allowing clients to arrive at more specialized models. We verify our theoretical analysis in experiments with deep convolutional and recurrent neural networks on commonly used Federated Learning datasets.  
\end{abstract}

\section{Introduction}
\label{sec:intro}
Federated Learning \cite{mcmahan2016communication}\cite{konevcny2016federated}\cite{bonawitz2017practical}\cite{bonawitz2019towards}\cite{li2019federated} is a distributed training framework, which allows multiple clients (typically mobile or IoT devices) to jointly train a single deep learning model on their combined data in a communication-efficient way, without requiring any of the participants to reveal their private training data to a centralized entity or to each other.
Federated Learning realizes this goal via an iterative three-step protocol where in every communication round $t$, the clients first synchronize with the server by downloading the latest master model $\theta_t$. Every client then proceeds to improve the downloaded model, by performing multiple iterations of stochastic gradient descent with mini-batches sampled from it's local data $D_i$, resulting in a weight-update vector
\begin{align}
\label{eq:sgd}
\Delta\theta_i^{t+1}=\text{SGD}_k(\theta^t,D_i)-\theta^t,~i=1,..,m
\end{align}
Finally, all clients upload their computed weight-updates to the server, where they are aggregated by weighted averaging according to
\begin{align}
\label{eq:update}
\theta^{t+1}=\theta^t+\sum_{i=1}^m\frac{|D_i|}{|D|}\Delta\theta_i^{t+1}
\end{align}  
to create the next master model. The procedure is summarized in Algorithm \ref{alg:FL}.

Federated Learning implicitly makes the assumption that it is possible for one single model to fit all client's data generating  distributions $\varphi_i$ at the same time. Given a model $f_\theta : \mathcal{X} \rightarrow \mathcal{Y}$ parametrized by $\theta\in\Theta$ and a loss function $l : \mathcal{Y}\times\mathcal{Y}\rightarrow\mathbb{R}_{\geq0}$ we can formally state this assumption as follows:

\begin{assumption}
\label{ass:1}
\textbf{("Federated Learning"):} 
\textit{
There exists a parameter configuration $\theta^*\in\Theta$, that (locally) minimizes the risk on all clients' data generating distributions at the same time: 
\begin{align}
\label{eq:fl_assume}
R_i(\theta^*)\leq R_i(\theta)~\forall \theta\in B_\varepsilon(\theta^*), i=1,..,m
\end{align} 
}
for some $\varepsilon>0$. Hereby
\begin{align}
R_i(\theta) = \int l(f_\theta(x), y)d\varphi_i(x,y)
\end{align}
is the risk function associated with distribution $\varphi_i$. 
\end{assumption}

It is easy to see that this assumption is not always satisfied. Concretely it is violated if either (a) clients have disagreeing conditional distributions $\varphi_i(y|x)\neq \varphi_j(y|x)$ or (b) the model $f_\theta$ is not expressive enough to fit all distributions at the same time. Simple counter examples for both cases are presented in Figure \ref{fig:ex_incongruent}.  

\begin{figure}
\centering
\includegraphics[width=0.5\textwidth]{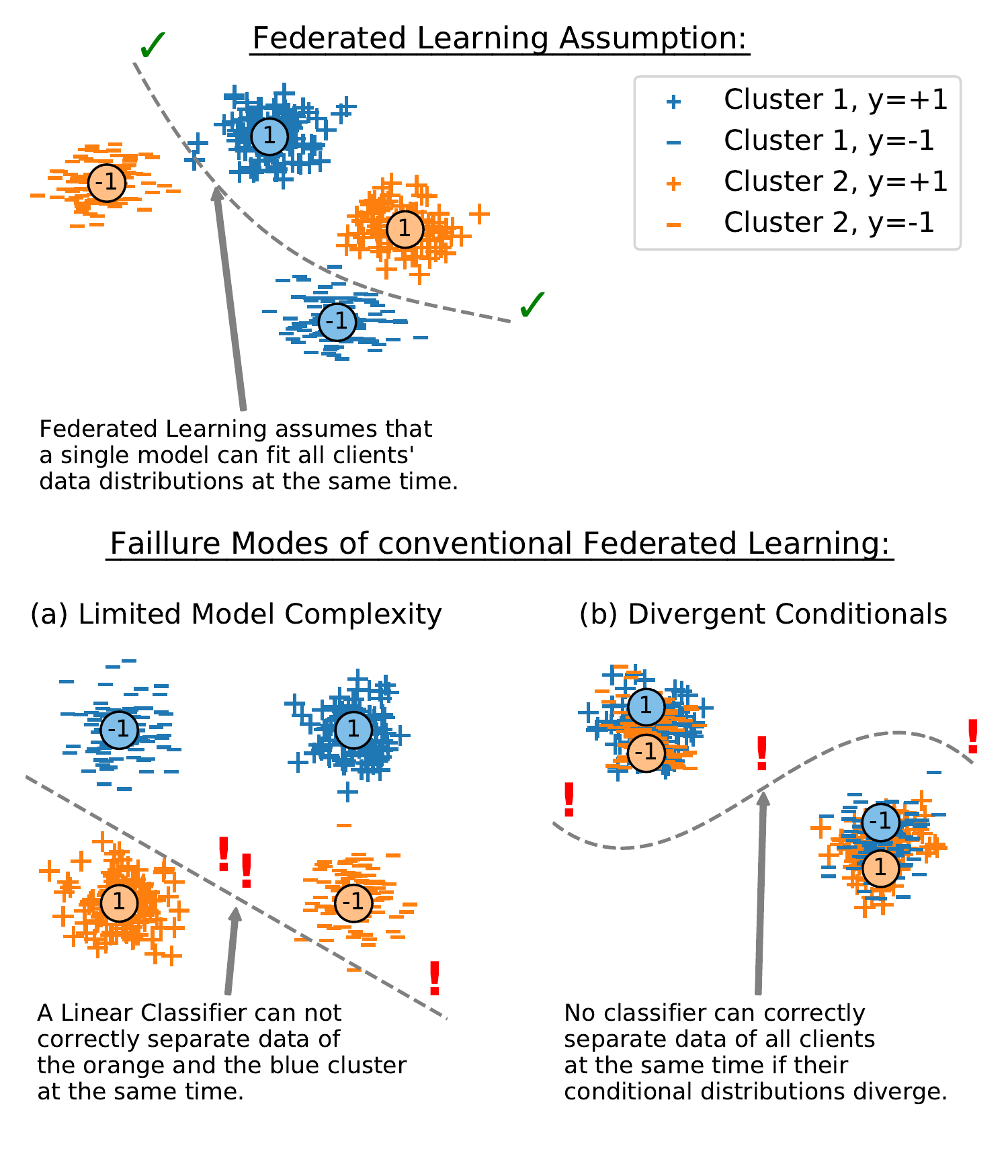}
\caption{Two toy cases in which the Federated Learning Assumption is violated. Blue points belong to clients from the first cluster while orange points belong to clients from the second cluster. Left: Federated XOR-problem. An insufficiently complex model is not capable of fitting all clients' data distributions at the same time. Right: If different clients' conditional distributions diverge, no single model can fit all distributions at the same time. In both cases the data on clients belonging to the same cluster can be easily separated.}
\label{fig:ex_incongruent}
\end{figure}

In the following we will call a set of clients and their data generating distributions $\varphi$ \emph{congruent} (with respect to $f$ and $l$) if they satisfy Assumption \ref{ass:1} and \emph{incongruent} if they don't.

In this work, we argue that Assumption \ref{ass:1} is frequently violated in real Federated Learning applications, especially given the fact that in Federated Learning clients (a) can hold arbitrary non-iid data, which can not be audited by the centralized server due to privacy constraints and (b) typically run on limited hardware which puts restrictions on the model complexity.  For illustration consider the following practical scenarios:

\emph{Varying Preferences:} Assume a scenario where every client holds a local dataset of images of human faces and the goal is to train an 'attractiveness' classifier on the joint data of all clients. Naturally, different clients will have varying opinions about the attractiveness of certain individuals, which corresponds to disagreeing conditional distributions on all clients' data. Assume for instance that one half of the client population thinks that people wearing glasses are attractive, while the other half thinks that those people are unattractive. In this situation one single model will \emph{never} be able to accurately predict attractiveness of glasses-wearing people for all clients at the same time (confer also Figure \ref{fig:ex_incongruent} right).

\emph{Limited Model Complexity:} Assume a number of clients are trying to jointly train a language model for next-word prediction on private text messages. In this scenario the statistics of a client's text messages will likely vary a lot based on demographic factors, interests, etc. For instance, text messages composed by teenagers will typically exhibit different statistics than than those composed by elderly people. An insufficiently expressive model will not be able to fit the data of all clients at the same time.

\emph{Presence of Adversaries:} A special case of incongruence is given, if a subset of the client population behaves in an adversarial manner. In this scenario the adversaries could deliberately alter their local data distribution in order to encode arbitrary behavior into the jointly trained model, thus affecting the model decisions on all other clients and causing potential harm.

\ \\
The goal in Federated Multi-Task Learning is to provide every client with a model that optimally fits it's local data distribution.
In all of the above described situations the ordinary Federated Learning framework, in which all clients are treated equally and only one single global model is learned, is not capable of achieving this goal. 

In order to incorporate the above presented problems with incongruent data generating distributions,
we suggest to generalize the conventional Federated Learning Assumption: 

\begin{assumption}
\label{ass:2}
\textbf{("Clustered Federated Learning"):} 
\textit{
There exists a partitioning $\mathcal{C}=\{c_1,..,c_k\}$, $\bigcup_{i=1}^k c_k = \{1,..,m\}$ of the client population, such that every subset of clients $c\in\mathcal{C}$ satisfies the conventional Federated Learning Assumption. 
}
\end{assumption}
\ \\

The remainder of this manuscript is organized as follows: In the next section (\ref{sec:theory}) we will derive a computationally efficient tool based on the cosine similarity between the clients' gradient updates that \emph{provably} allows us to infer whether two members of the client population have the same data generating distribution, thus making it possible for us to infer the clustering structure $\mathcal{C}$. Based on the theoretical insights in section \ref{sec:theory} we present the Clustered Federated Learning Algorithm in section \ref{sec:cfl}.  
After reviewing related literature in section \ref{sec:litterature}, we address implementation details in section \ref{sec:implementation} and demonstrate that our novel method can be implemented without making modifications to the Federated Learning communication protocol (section \ref{sec:weight-updates}). We furthermore show that our method can be implemented in a privacy preserving way (section \ref{sec:privacy}) and is flexible enough to handle client populations that vary over time (section \ref{sec:tree}). Finally, in section \ref{sec:experiments}, we perform extensive experiments on a variety of convolutional and recurrent neural networks applied to common Federated Learning datasets.

\section{Cosine Similarity based Clustering}
\label{sec:theory}
In this paper, we address the question of how to solve distributed learning problems that satisfy Assumption \ref{ass:2} (which generalizes the Federated Learning Assumption \ref{ass:1}). This will require us to first identify the correct partitioning $\mathcal{C}$, which at first glance seems like a daunting task, as under the Federated Learning paradigm the server has no access to the clients' data, their data generating distributions or any meta information thereof.

An easier task than trying to directly infer the entire clustering structure $\mathcal{C}$, is to find a correct bi-partitioning in the sense of the following definition:
\begin{definition}
\label{def:1}
Let $m\geq k\geq 2$ and 
\begin{align}
I : \{1,..,m\} \rightarrow \{1,..,k\}, i\mapsto I(i)
\end{align}
be the mapping that assigns a client $i$ to it's data generating distribution $\varphi_{I(i)}$. Then we call a bi-partitioning $c_1\dot{\cup} c_2=\{1,..,m\}$ with $c_1\neq \emptyset$ and $c_2\neq \emptyset$ \emph{correct} if and only if
\begin{align}
I(i)\neq I(j)~~ \forall i\in c_1, j\in c_2.
\end{align} 
\end{definition}
In other words, a bi-partitioning is called \emph{correct}, if clients with the same data generating distribution end up in the same cluster. It is easy to see, that the clustering $\mathcal{C}=\{c_1,..,c_k\}$ can be obtained after exactly $k-1$ correct bi-partitions.

In the following we will demonstrate that there exists an explicit criterion based on which a correct bi-partitioning can be inferred. To see this, let us first look at the following simplified Federated Learning setting with $m$ clients, in which the data on every client was sampled from one of two data generating distributions $\varphi_1$, $\varphi_2$ such that
\begin{align}
D_i\sim\varphi_{I(i)}(x,y).
\end{align}
Every Client is associated with an empirical risk function
\begin{align}
r_i(\theta) = \sum_{x\in D_i} l_\theta(f(x_i),y_i)
\end{align}
which approximates the true risk arbitrarily well if the number of data points on every client is sufficiently large
\begin{align}
\label{eq:approx}
r_i(\theta) \approx R_{I(i)}(\theta) := \int_{x,y} l_\theta(f(x),y) d\varphi_{I(i)}(x,y).
\end{align}
For demonstration purposes let us first assume equality in \eqref{eq:approx}. Then the Federated Learning objective becomes
\begin{align}
\label{eq:federated_obj}
F(\theta) := \sum_{i=1}^m \frac{|D_i|}{|D|} r_i(\theta) = a_1 R_1(\theta) + a_2 R_2(\theta)
\end{align}
with $a_1=\sum_{i, I(i)=1} |D_i|/|D|$ and $a_2=\sum_{i, I(i)=2} |D_i|/|D|$.
Under standard assumptions it has been shown \cite{li2019convergence}\cite{sahu2018convergence} that the Federated Learning optimization protocol described in equations \eqref{eq:sgd} and \eqref{eq:update} converges to a stationary point $\theta^*$ of the Federated Learning objective. In this point it holds that
\begin{align}
\label{eq:equilib}
0 = \nabla F(\theta^*) = a_1\nabla R_1(\theta^*) + a_2\nabla R_2(\theta^*)
\end{align} 
Now we are in one of two situations. Either it holds that $\nabla R_1(\theta^*) =\nabla R_2(\theta^*)=0$, in which case we have simultaneously minimized the risk of all clients. This means $\varphi_1$ and $\varphi_2$ are congruent and we have solved the distributed learning problem. Or, otherwise, it has to hold 
\begin{align}
\label{eq:opposite}
\nabla R_1(\theta^*) =- \frac{a_2}{a_1}\nabla R_2(\theta^*)\neq0
\end{align}
and $\varphi_1$ and $\varphi_2$ are incongruent. In this situation the \emph{cosine similarity} between the gradient updates of any two clients is given by
\begin{align}
\begin{split}
\alpha_{i,j}:=\alpha(\nabla r_i(\theta^*), \nabla r_j(\theta^*))&:=\frac{\langle\nabla r_i(\theta^*), \nabla r_j(\theta^*)\rangle}{\|\nabla r_i(\theta^*)\|\|\nabla r_j(\theta^*)\|}\\
&=\frac{\langle\nabla R_{I(i)}(\theta^*), \nabla R_{I(j)}(\theta^*)\rangle}{\|\nabla R_{I(i)}(\theta^*)\|\|\nabla R_{I(j)}(\theta^*)\|}\\
\label{eq:cossim}
&=\begin{cases}
1 & \text{ if }I(i)=I(j)\\
-1 & \text{ if }I(i)\neq I(j)
\end{cases}
\end{split}
\end{align}
Consequently, a correct bi-partitioning is given by
\begin{align}
c_1=\{i|\alpha_{i,0}=1\},~c_2=\{i|\alpha_{i,0}=-1\}.
\end{align}
This consideration provides us with the insight that, in a stationary solution of the Federated Learning objective $\theta^*$, we can \emph{distinguish clients based on their hidden data generating distribution by inspecting the cosine similarity between their gradient updates}. For a visual illustration of the result we refer to Figure \ref{fig:toy}. 


If we drop the equality assumption in \eqref{eq:approx} and allow for an arbitrary number of data generating distributions, we obtain the following generalized version of result \eqref{eq:cossim}:
\begin{figure}
\centering
\includegraphics[width=0.5\textwidth]{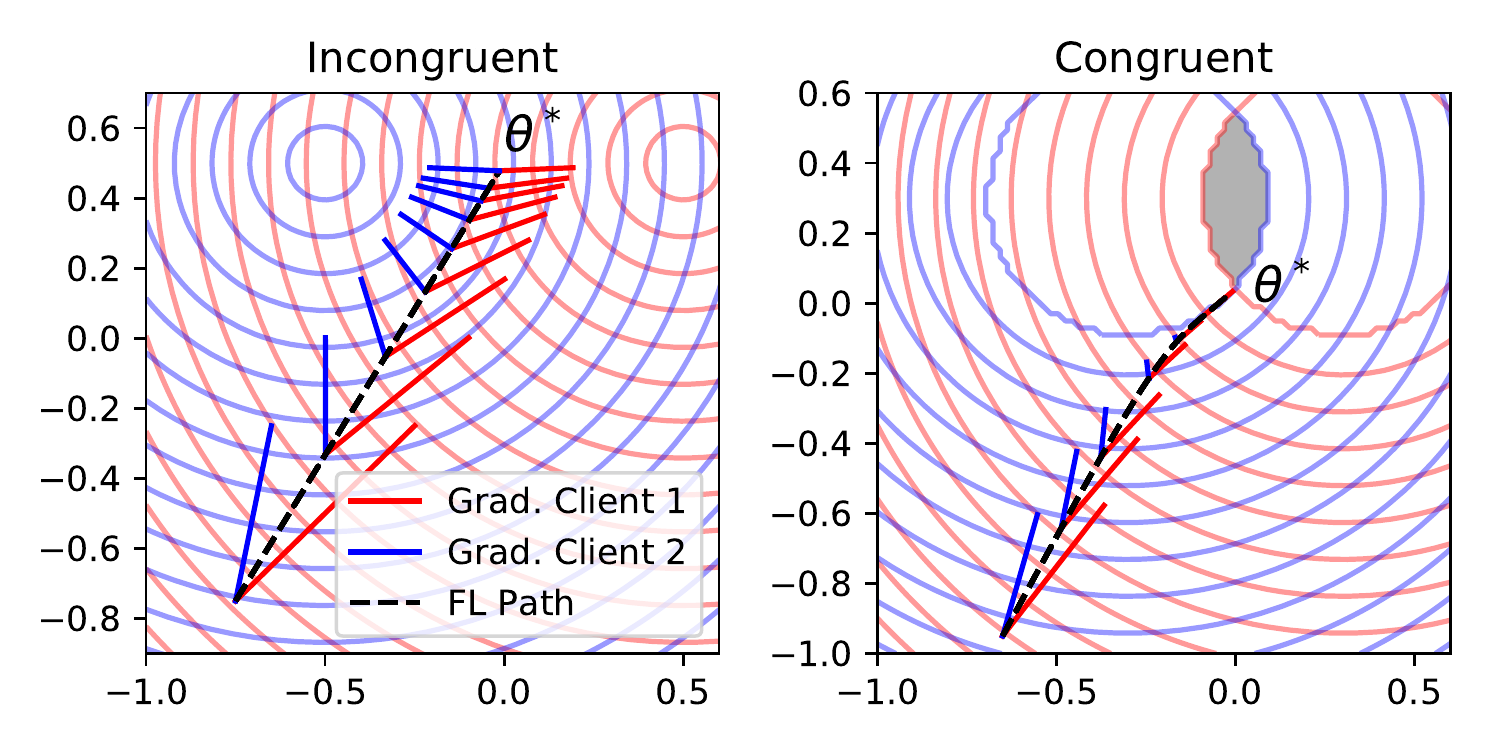}
\caption{Displayed are the optimization paths of Federated Learning with two clients, applied to two different toy problems with incongruent (left) and congruent (right) risk functions. In the incongruent case Federated Learning converges to a stationary point of the FL objective where the gradients of the two clients are of positive norm and point into opposite directions. In the congruent case there exists an area (marked grey in the plot) where both risk functions are minimized. If Federated Learning converges to this area the norm of both client's gradient updates goes to zero. By inspecting the gradient norms the two cases can be distinguished.}
\label{fig:toy}
\end{figure}

\begin{theorem}[Separation Theorem]
\label{theo:separation_theorem}
Let $D_1,..,D_m$ be the local training data of $m$ different clients, each dataset sampled from one of $k$ different data generating distributions $\varphi_1,..,\varphi_k$, such that $D_i\sim\varphi_{I(i)}(x,y)$. Let the empirical risk on every client approximate the true risk at every stationary solution of the Federated Learning objective $\theta^*$ s.t.
\begin{align}
\|\nabla R_{I(i)}(\theta^*)\| > \|\nabla R_{I(i)}(\theta^*)-\nabla r_i(\theta^*)\|
\end{align} 
and define 
\begin{align}
\gamma_i:=\frac{\|\nabla R_{I(i)}(\theta^*)-\nabla r_i(\theta^*)\|}{\|\nabla R_{I(i)}(\theta^*)\|}\in[0,1)
\end{align}
Then there exists a bi-partitioning $c_1^*\cup c_2^*=\{1,..,m\}$ of the client population such that that the maximum similarity between the updates from any two clients from different clusters can be bounded from above according to
\begin{align}
\label{eq:res1}
&\alpha_{cross}^{max} := \min_{c_1\cup c_2=\{1,..,m\}}\max_{i\in c_1,j\in c_2}\alpha(\nabla r_i(\theta^*), \nabla r_j(\theta^*))\\
&=\max_{i\in c_1^*,j\in c_2^*}\alpha(\nabla r_i(\theta^*), \nabla r_j(\theta^*))\\
&\leq
\begin{cases}
 \cos(\frac{\pi}{k-1})H_{i,j}+\sin(\frac{\pi}{k-1})\sqrt{1-H_{i,j}^2} & \text{ if }H\geq \cos(\frac{\pi}{k-1}) \\
 1 & \text{ else }
\end{cases}
\end{align}
with
\begin{align}
H_{i,j} = -\gamma_i\gamma_j+\sqrt{1-\gamma_i^2}\sqrt{1-\gamma_j^2}\in(-1,1].
\end{align}
At the same time the similarity between updates from clients which share the same data generating distribution can be bounded from below by
\begin{align}
\label{eq:res2}
\alpha_{intra}^{min} := \min_{\underset{I(i)=I(j)}{i,j}}\alpha(\nabla_\theta r_i(\theta^*), \nabla_\theta r_j(\theta^*))\geq \min_{\underset{I(i)=I(j)}{i,j}}H_{i,j}.
\end{align}
\end{theorem}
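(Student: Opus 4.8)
The plan is to split the theorem into a purely geometric statement about the $k$ ``population'' risk gradients $g_l:=\nabla R_l(\theta^*)$ and a perturbation argument that carries the geometric conclusion over to the observable client gradients $\nabla r_i(\theta^*)$.

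First I would extract the geometric core. After grouping clients by their data-generating distribution and invoking the assumed empirical--true risk proximity, the Federated Learning objective is of the form $\sum_{l=1}^k a_l R_l$ with $a_l=\sum_{i:I(i)=l}|D_i|/|D|>0$, so stationarity of $\theta^*$ gives $\sum_{l=1}^k a_l g_l=0$. The key lemma I would then prove is: \emph{if $k$ nonzero vectors admit a strictly positive linear combination equal to $0$, there is a bipartition $S_1\,\dot{\cup}\,S_2=\{1,\dots,k\}$ into nonempty parts with $\angle(g_l,g_{l'})\ge\pi/(k-1)$, equivalently $\alpha(g_l,g_{l'})\le\cos(\pi/(k-1))$, for all $l\in S_1,\,l'\in S_2$}. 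The route: introduce the ``closeness graph'' on $\{1,\dots,k\}$ with an edge whenever $\angle(g_l,g_{l'})<\pi/(k-1)$, and show it must be disconnected --- then any bipartition separating two of its components works. For disconnectedness, assume the graph were connected, pick a spanning tree, and note that any path in the tree has at most $k-1$ edges, each changing the unit direction $u_l:=g_l/\|g_l\|$ by less than $\pi/(k-1)$; a short argument (cleanest in two dimensions, where an Euler tour of the tree makes $2(k-1)$ steps of size $<\pi/(k-1)$ and returns to its start, so the total angular range is $<\pi$) then forces all $u_l$ into an open half-space $\{x:\langle x,v\rangle>0\}$, whence $0=\langle\sum_l a_l g_l,v\rangle>0$, a contradiction.

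Second comes the perturbation. Writing $g_i^{\mathrm{tr}}:=\nabla R_{I(i)}(\theta^*)$, the hypothesis $\gamma_i=\|g_i^{\mathrm{tr}}-\nabla r_i(\theta^*)\|/\|g_i^{\mathrm{tr}}\|\in[0,1)$ says $\nabla r_i(\theta^*)$ lies in the ball of radius $\gamma_i\|g_i^{\mathrm{tr}}\|$ about $g_i^{\mathrm{tr}}$, a ball not containing the origin, so a tangent-line estimate gives $\angle(\nabla r_i(\theta^*),g_i^{\mathrm{tr}})\le\arcsin\gamma_i<\pi/2$ (in particular $\nabla r_i(\theta^*)\neq0$, so the cosine similarities are well defined). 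Pulling back the distribution-level bipartition $S_1,S_2$ from the lemma, define $c_1:=\{i:I(i)\in S_1\}$ and $c_2:=\{i:I(i)\in S_2\}$. For $i\in c_1,\,j\in c_2$ we have $I(i)\in S_1,\,I(j)\in S_2$, so the triangle inequality for angles gives $\angle(\nabla r_i,\nabla r_j)\ge\angle(g_i^{\mathrm{tr}},g_j^{\mathrm{tr}})-\arcsin\gamma_i-\arcsin\gamma_j\ge\pi/(k-1)-\arcsin\gamma_i-\arcsin\gamma_j$; using $\cos(\arcsin\gamma_i+\arcsin\gamma_j)=H_{i,j}$, $\sin(\arcsin\gamma_i+\arcsin\gamma_j)=\sqrt{1-H_{i,j}^2}$, monotonicity of $\cos$ on $[0,\pi]$, and the angle-subtraction formula, the right-hand side of this angle bound transforms into the top branch of \eqref{eq:res1} whenever it is nonnegative, i.e. whenever $H_{i,j}\ge\cos(\pi/(k-1))$, and is otherwise dominated by $1$. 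Since $(c_1,c_2)$ is one of the bipartitions over which $\alpha_{cross}^{max}$ minimizes, this proves \eqref{eq:res1}. Identity \eqref{eq:res2} is the same computation with $\angle(g_i^{\mathrm{tr}},g_j^{\mathrm{tr}})=0$: for $I(i)=I(j)$ one gets $\angle(\nabla r_i,\nabla r_j)\le\arcsin\gamma_i+\arcsin\gamma_j$, hence $\alpha(\nabla r_i,\nabla r_j)\ge H_{i,j}$, and the claim follows by minimizing over same-distribution pairs.

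I expect the main obstacle to be the geometric lemma --- precisely, showing the closeness graph is disconnected. The two-dimensional case is clean via the spanning-tree / Euler-tour argument above, but in general the $k$ vectors span up to $k-1$ dimensions and the ``angular walk'' no longer lives on a circle, so one must argue more carefully --- e.g. by working inside their span and exploiting the tree structure directly, or by an induction on $k$ that deletes a non-cut vertex of the closeness graph. Everything else --- the stationarity identity, the tangent-line angle estimate, and the trigonometric bookkeeping that turns the angle inequalities into the stated piecewise bounds --- is routine.
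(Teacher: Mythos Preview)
Your decomposition is exactly the paper's: a geometric lemma about the $k$ true-risk gradients $g_l=\nabla R_l(\theta^*)$ (which satisfy $\sum_l a_l g_l=0$ at stationarity) followed by a perturbation step carrying the conclusion to the empirical gradients. The perturbation half --- the tangent-line bound $\angle(\nabla r_i(\theta^*),g_{I(i)})\le\arcsin\gamma_i$, the triangle inequality for angles, and the trigonometric bookkeeping $\cos(\arcsin\gamma_i+\arcsin\gamma_j)=H_{i,j}$ that turns the angle inequalities into the stated piecewise bound --- is essentially the paper's Lemmas~\ref{lemma:same} and~\ref{lemma:different} verbatim.

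The only substantive difference is in how you attack the geometric lemma (the paper's Lemma~\ref{lemma:kcluster}). You argue via a ``closeness graph'' (edge when the angle is $<\pi/(k-1)$) and aim to show it is disconnected, which is clean in $d=2$ via the spanning-tree lift you sketch but, as you correctly flag, does not obviously extend to higher dimensions. The paper instead gives a direct $d=2$ argument --- sort the $k$ unit directions by polar angle, note that the $k$ consecutive angular gaps sum to $2\pi$ while no single gap can exceed $\pi$ (else all vectors would lie in an open half-plane, contradicting $\sum_l a_l g_l=0$), so the second-largest gap is at least $(2\pi-\pi)/(k-1)=\pi/(k-1)$; cutting the circle at the two largest gaps produces the bipartition --- and then reduces $d>2$ to $d=2$ by orthogonally projecting all $g_l$ onto the $2$-plane spanned by the pair realizing the minimum cross-cluster angle in an optimal bipartition, observing that the projected vectors still sum to zero and that this critical angle is preserved. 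That projection-to-$2$D device is exactly the missing ingredient for the obstacle you flag, and you could equally well graft it onto your graph argument. Your formulation is more structural and makes the role of connectivity explicit; the paper's gap-counting is more constructive (it names the bipartition) and comes packaged with a dimension-reduction step, so it bypasses the higher-dimensional difficulty you anticipate.
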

The proof of Theorem \ref{theo:separation_theorem} can be found in the Appendix. 
\begin{remark}
In the case with two data generating distributions ($k=2$) the result simplifies to
\begin{align}
\alpha_{cross}^{max}=\max_{i\in c_1^*,j\in c_2^*}\alpha(\nabla_\theta r_i(\theta^*), \nabla_\theta r_j(\theta^*))\leq \max_{i\in c_1^*,j\in c_2^*}-H_{i,j}
\end{align}
for a certain partitioning, respective
\begin{align}
\alpha_{intra}^{min}=\min_{\underset{I(i)=I(j)}{i,j}}\alpha(\nabla_\theta r_i(\theta^*), \nabla_\theta r_j(\theta^*))\geq \min_{\underset{I(i)=I(j)}{i,j}} H_{i,j}
\end{align}
for two clients from the same cluster. If additionally $\gamma_i=0$ for all $i=1,..,m$ then $H_{i,j}=1$ and we retain result \eqref{eq:cossim}.
\end{remark}
From Theorem \ref{theo:separation_theorem} we can directly deduce a correct separation rule:
\begin{corollary}
\label{corr:1}
If in Theorem \ref{theo:separation_theorem} $k$ and  $\gamma_i$, $i=1,..,m$ are in such a way that
\begin{align}
\alpha_{intra}^{min}>\alpha_{cross}^{max}
\end{align}
then the partitioning 
\begin{align}
\label{eq:split}
c_1,c_2\leftarrow \arg\min_{c_1\cup c_2 = c}(\max_{i\in c_1, j\in c_2} \alpha_{i,j}).
\end{align}
is \emph{always} correct in the sense of Definition \ref{def:1}.
\end{corollary}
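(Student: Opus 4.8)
The plan is a short proof by contradiction that feeds directly off the two bounds established in Theorem~\ref{theo:separation_theorem}. Assume, for the sake of contradiction, that the partitioning $(c_1,c_2)$ returned by rule~\eqref{eq:split} is \emph{not} correct in the sense of Definition~\ref{def:1}. Unpacking that definition, this means there is a pair of clients $i\in c_1$, $j\in c_2$ that land in opposite blocks of the partition yet share the same data generating distribution, i.e. $I(i)=I(j)$. The goal is to show this situation is impossible under the hypothesis $\alpha_{intra}^{min}>\alpha_{cross}^{max}$.

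First I would bound the objective value of $(c_1,c_2)$ from below. Since $i$ and $j$ sit in different blocks, the quantity minimized in~\eqref{eq:split} satisfies $\max_{i'\in c_1,\,j'\in c_2}\alpha_{i',j'}\geq\alpha_{i,j}$, and since $I(i)=I(j)$ the definition of $\alpha_{intra}^{min}$ in~\eqref{eq:res2} gives $\alpha_{i,j}\geq\alpha_{intra}^{min}$; chaining these inequalities with the standing hypothesis shows that $(c_1,c_2)$ attains a value strictly larger than $\alpha_{cross}^{max}$. Next I would invoke the existence clause of Theorem~\ref{theo:separation_theorem}, which furnishes a bi-partitioning $(c_1^*,c_2^*)$ whose maximal cross-cluster similarity equals exactly $\alpha_{cross}^{max}$. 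Hence $(c_1^*,c_2^*)$ is a strictly better feasible point for the minimization in~\eqref{eq:split} than $(c_1,c_2)$, contradicting the choice of $(c_1,c_2)$ as an $\arg\min$. Therefore no such pair $i,j$ can exist and the split is correct. The same reasoning applies verbatim when~\eqref{eq:split} is run on any subset $c\subseteq\{1,\dots,m\}$ that is itself a union of true clusters, which is the form in which the rule gets applied to recover the full clustering $\mathcal{C}$ in $k-1$ successive bi-partitions.

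Essentially all of the substance lives in Theorem~\ref{theo:separation_theorem}; the corollary merely juxtaposes its lower bound on intra-cluster similarity against its upper bound on cross-cluster similarity. The one step that needs a little care is closing the contradiction: one must point to a concretely available competitor partition that strictly beats the $\arg\min$, and this is precisely what the ``there exists $c_1^*\cup c_2^*$'' assertion of the Separation Theorem delivers — without that existence statement the argument would have nothing to contradict against. Beyond this, the proof is just bookkeeping with the definitions of $\alpha_{intra}^{min}$ and $\alpha_{cross}^{max}$.
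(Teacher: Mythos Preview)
Your proof is correct and follows essentially the same logic as the paper's: the paper phrases it as a direct argument rather than by contradiction, observing that for the argmin partition $(c_1,c_2)$ every $i\in c_1$, $j\in c_2$ satisfies $\alpha_{i,j}\le\alpha_{cross}^{max}<\alpha_{intra}^{min}$, whence $I(i)\neq I(j)$. Your detour through the existence clause of Theorem~\ref{theo:separation_theorem} is not even needed, since $\alpha_{cross}^{max}$ is by definition the minimum over bi-partitions, so the argmin in~\eqref{eq:split} attains it automatically.
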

\begin{proof}
Set \begin{align}
c_1,c_2\leftarrow \arg\min_{c_1\cup c_2 = c}(\max_{i\in c_1, j\in c_2} \alpha_{i,j})
\end{align}
and let $i\in c_1$, $j\in c_2$ then 
\begin{align}
\alpha_{i,j} \leq \alpha_{cross}^{max} <  \alpha_{intra}^{min} = \min_{\underset{I(i)=I(j)}{i,j}}\alpha_{i,j}
\end{align}
and hence $i$ and $j$ can not have the same data generating distribution.
\end{proof}
This consideration leads us to the notion of the separation gap:
\begin{definition}[Separation Gap]
\label{def:separationgap}
Given a cosine-similarity matrix $\alpha$ and a mapping from client to data generating distribution $I$ we define the notion of a separation gap
\begin{align}
g(\alpha)&:=\alpha_{intra}^{min}-\alpha_{cross}^{max}\\&=\min_{\underset{I(i)=I(j)}{i,j}}\alpha_{i,j}-\min_{c_1\cup c_2 = c}(\max_{i\in c_1, j\in c_2} \alpha_{i,j})
\end{align}
\end{definition}
\begin{remark}
By Corollary \ref{corr:1} the bi-partitioning \eqref{eq:split} will be correct in the sense of Definition \ref{def:1} if and only if the separation gap is greater than zero.
\end{remark}
\begin{figure}
\centering
\includegraphics[width=0.5\textwidth]{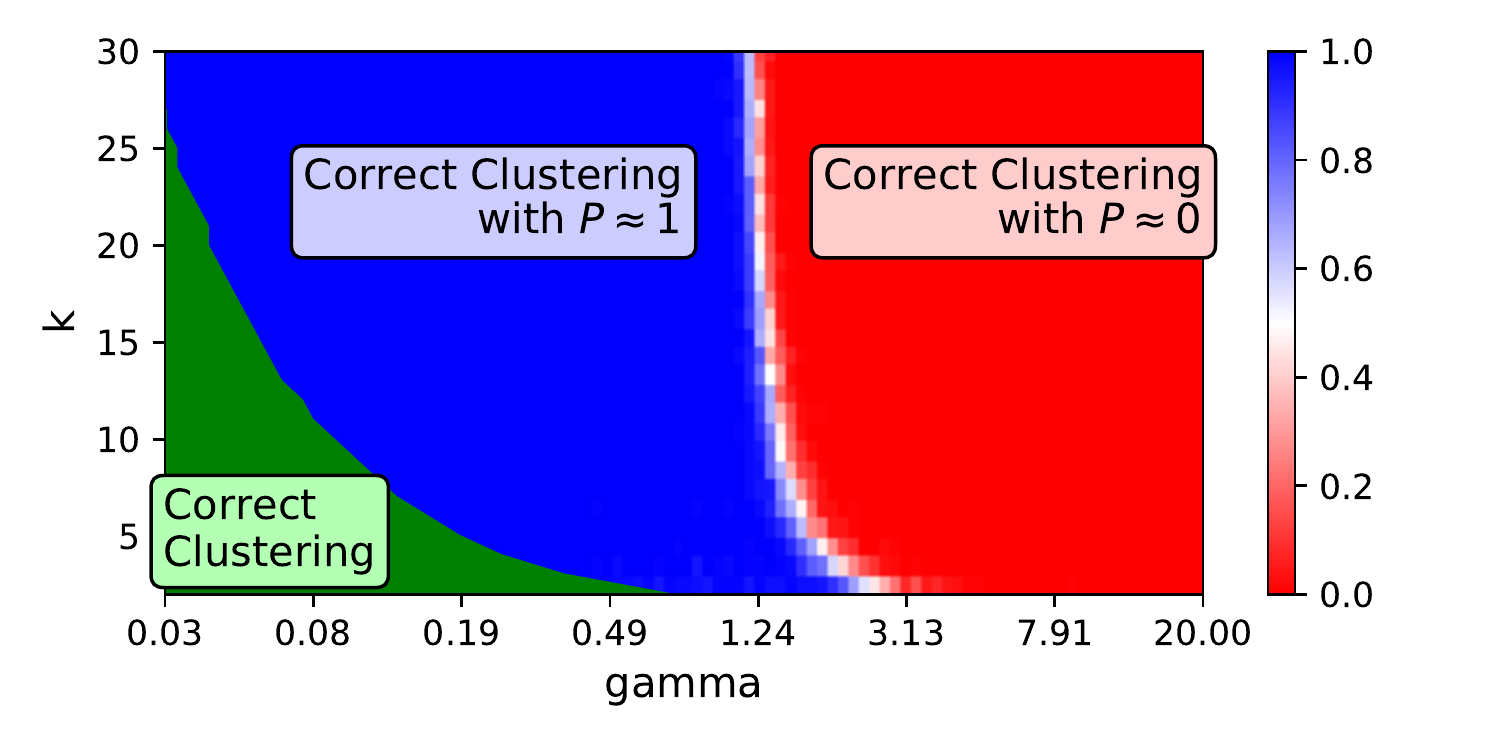}
\caption{Clustering quality as a function of the number of data generating distributions $k$ and the relative approximation noise $\gamma$. For all values of $k$ and $\gamma$ in the green area, CFL will \emph{always} correctly separate the clients (by Theorem \ref{theo:separation_theorem}). For all values of $k$ and $\gamma$ in the blue area we find empirically that CFL will correctly separate the clients with probability close to 1.}
\label{fig:parameters}
\end{figure}
Theorem \ref{theo:separation_theorem} gives an estimate for the similarities in the \emph{absolute worst-case}. In practice $\alpha_{intra}^{min}$ typically will be much larger and $\alpha_{cross}^{max}$ typically will be much smaller, especially if the parameter dimension $d$ is large.
For instance, if we set $d=10^2$ (which is still many orders of magnitude smaller than typical modern neural networks), $m=3k$, and assume $\nabla R_{I(i)}(\theta^*)$ and $\nabla R_{I(i)}(\theta^*)-\nabla r_i(\theta^*)$ to be normally distributed for all $i=1,..,m$ then experimentally we find (Figure \ref{fig:parameters}) that
\begin{align}
P[\text{"Correct Clustering"}]=P[g(\alpha)>0]\approx 1
\end{align}
even for large values of $k>10$ and 
\begin{align}
\gamma_{max}:=\max_{i=1,..,m}\gamma_i>1.
\end{align}
This means that using the cosine similarity criterion \eqref{eq:split} we can readily find a correct bi-partitioning $c_1, c_2$ even if the number of data generating distributions is high and the empirical risk on every client's data is only a very loose approximation of the true risk. 

\subsection{Distinguishing Congruent and Incongruent Clients}

In order to appropriately generalize the classical Federated Learning setting, we need to make sure to only split up clients with \emph{incongruent} data distributions. In the classical congruent non-iid Federated Learning setting described in \cite{mcmahan2016communication} where one single model can be learned, performance will typically degrade if clients with varying distributions are separated into different clusters due to the restricted knowledge transfer between clients in different clusters. Luckily we have a criterion at hand to distinguish the two cases. To realize this we have inspect the gradients computed by the clients at a stationary point $\theta^*$. When client distributions are incongruent, the stationary solution of the Federated Learning objective by definition can not be stationary for the individual clients. Hence the norm of the clients' gradients has to be strictly greater than zero. If conversely the client distributions are congruent, Federated optimization will be able to jointly optimize all clients' local risk functions and hence the norm of the clients' gradients will tend towards zero as we are approaching the stationary point. Based on this observation we can formulate the following criteria which allow us make the decision whether to split or not: Splitting should only take place if it holds that both (a) we are close to a stationary point of the FL objective
\begin{align}
\label{eq:servernorm}
0 \leq \|\sum_{i=1,..,m}\frac{D_i}{|D_c|}\nabla_\theta r_i(\theta^*)\|<\varepsilon_1
\end{align}     
and (b) the individual clients are far from a stationary point of their local empirical risk
\begin{align}
\label{eq:clientnorm}
\max_{i=1,..,m} \|\nabla_\theta r_i(\theta^*)\|>\varepsilon_2 > 0
\end{align}
Figure \ref{fig:toy} gives a visual illustration of this idea for a simple two dimensional problem. We will also experimentally verify the clustering criteria \eqref{eq:servernorm} and \eqref{eq:clientnorm} in section \ref{sec:criteria}.

In practice we have another viable option to distinguish the congruent from the incongruent case. As splitting will only be performed after Federated Learning has converged to a stationary point, we always have computed the conventional Federated Learning solution as part of Clustered Federated Learning. This means that if after splitting up the clients a degradation in model performance is observed, it is always possible to fall back to the Federated Learning solution. In this sense Clustered Federated Learning will always improve the Federated Learning performance (or perform equally well at worst).

\section{Clustered Federated Learning} 
\label{sec:cfl}
Clustered Federated Learning recursively bi-partitions the client population in a top-down way: Starting from an initial set of clients $c=\{1,..,m\}$ and a parameter initialization $\theta_0$, CFL performs Federated Learning according to Algorithm \ref{alg:FL}, in order to obtain a stationary solution $\theta^*$ of the FL objective. After Federated Learning has converged, the stopping criterion 
\begin{align}
\label{eq:stopp}
0 \leq  \max_{i\in c} \|\nabla_\theta r_i(\theta^*)\|<\varepsilon_2 
\end{align}
is evaluated. If criterion \eqref{eq:stopp} is satisfied, we know that all clients are sufficiently close to a stationary solution of their local risk and consequently CFL terminates, returning the FL solution $\theta^*$. If on the other hand, criterion \eqref{eq:stopp} is violated, this means that the clients are incongruent and the server computes the pairwise cosine similarities $\alpha$ between the clients' latest transmitted updates according to equation \eqref{eq:cossim}. Next, the server separates the clients into two clusters in such a way that the maximum similarity between clients from different clusters is minimized
\begin{align}
c_1,c_2\leftarrow \arg\min_{c_1\cup c_2 = c}(\max_{i\in c_1, j\in c_2} \alpha_{i,j}).
\end{align}
This optimal bi-partitioning problem at the core of CFL can be solved in $\mathcal{O}(m^3)$ using Algorithm \ref{alg:bipartition}. Since in Federated Learning it is assumed that the server has far greater computational power than the clients, the overhead of clustering will typically be negligible.

As derived in section \ref{sec:theory}, a correct bi-partitioning can always be ensured if it holds that 
\begin{align}
\alpha_{intra}^{min}>\alpha_{cross}^{max}.
\end{align}
While the optimal cross-cluster similarity $\alpha^{max}_{cross}$ can be easily computed in practice, computation of the intra cluster similarity requires knowledge of the clustering structure and hence $\alpha_{intra}^{min}$ can only be estimated using Theorem \ref{theo:separation_theorem} according to
\begin{align}
\alpha_{intra}^{min}&\geq \min_{\underset{I(i)=I(j)}{i,j}}-\gamma_i\gamma_j+\sqrt{1-\gamma_i^2}\sqrt{1-\gamma_j^2}\\&\geq  1 - 2\gamma_{max}^2.
\end{align}
Consequently we know that the bi-partitioning will be correct if
\begin{align}
\label{eq:crit}
\gamma_{max} < \sqrt{\frac{1-\alpha_{cross}^{max}}{2}}.
\end{align} 
independent of the number of data generating distributions $k$!
This criterion allows us to reject bi-partitionings, based on our assumptions on the approximation noise $\gamma_{max}$ (which is an interpretable hyperparameter).

If criterion \eqref{eq:crit} is satisfied, CFL is recursively re-applied to each of the two separate groups starting from the stationary solution $\theta^*$. Splitting recursively continues on until (after at most $k-1$ recursions) none of the sub-clusters violate the stopping criterion anymore, at which point all groups of mutually congruent clients $\mathcal{C}=\{c_1,..,c_k\}$ have been identified, and the Clustered Federated Learning problem characterized by Assumption \ref{ass:2} is solved. The entire recursive procedure is presented in Algorithm \ref{alg:CFL}.

%

\begin{algorithm} 
\caption{Optimal Bipartition}
\label{alg:bipartition}
\DontPrintSemicolon
\textbf{input:} Similarity Matrix $\alpha\in[-1,1]^{m,m}$\\
\textbf{outout:} bi-partitioning $c_1, c_2$ satisfying \eqref{eq:split}\\
\textbullet~ $s \leftarrow \text{argsort}(-\alpha[:])\in\mathbb{N}^{m^2}$\\
\textbullet~ $\mathcal{C}\leftarrow \{\{i\}|i=1,..,m\}$\\
\For{$i=1,..,m^2$}{
\textbullet~ $i_1\leftarrow s_i \text{ div } m$; $i_2\leftarrow s_i \text{ mod } m$\\
\textbullet~ $c_{tmp} \leftarrow \{\}$\\
\For{$c \in \mathcal{C}$}{
\If{$i_1 \in c$ \textbf{or} $i_2 \in c$}{
\textbullet~ $c_{tmp} \leftarrow c_{tmp} \cup c$\\
\textbullet~ $\mathcal{C} \leftarrow \mathcal{C} \setminus c$\\
}
}
\textbullet~ $\mathcal{C}\leftarrow \mathcal{C}\cup \{c_{tmp}\}$ \\
\If{$|\mathcal{C}|=2$}{
\Return $\mathcal{C}$
}
}
\end{algorithm}

\begin{algorithm}
\caption{Federated Learning (FL)}
\label{alg:FL}
\DontPrintSemicolon
\textbf{Input:} initial parameters $\theta$, set of clients $c$, $\varepsilon_1>0$\\
\Repeat{$\|\sum_{i\in c}\frac{|D_i|}{|D_c|}\Delta \theta_i\|<\varepsilon_1$}{
\For{$i\in c$ \textbf{in parallel}}{
\textbullet~ $\theta_i \leftarrow \theta$\\
\textbullet~ $\Delta \theta_i \leftarrow \text{SGD}_n(\theta_i, D_i)-\theta_i$ \\
}
\textbullet~ $\theta \leftarrow \theta+\sum_{i\in c}\frac{|D_i|}{|D_c|}\Delta \theta_i$ \\
}
\Return $\theta$
\end{algorithm}

\begin{algorithm}
\caption{Clustered Federated Learning (CFL)}
\label{alg:CFL}
\DontPrintSemicolon
\textbf{Input:} initial parameters $\theta$, set of clients $c$, $\gamma_{max}\in[0,1]$, $\varepsilon_2>0$\\
\textbullet~$\theta^*\leftarrow \text{FederatedLearning}(\theta,c)$\\
\textbullet~$\alpha_{i,j}\leftarrow \frac{\langle\nabla r_i(\theta^*), \nabla r_j(\theta^*)\rangle}{\|\nabla r_i(\theta^*)\|\|\nabla r_j(\theta^*)\|}, i,j\in c$\\
\textbullet~$c_1,c_2\leftarrow \arg\min_{c_1\cup c_2 = c}(\max_{i\in c_1, j\in c_2} \alpha_{i,j})$\\
\textbullet~$\alpha_{cross}^{max}\leftarrow \max_{i\in c_1, j\in c_2} \alpha_{i,j}$\\
\uIf{$\max_{i\in c} \|\nabla r_i(\theta^*)\|\geq \varepsilon_2$ \textbf{and} $\sqrt{\frac{1-\alpha_{cross}^{max}}{2}}>\gamma_{max}$}{
\textbullet~$\theta_i^*, i\in c_1\leftarrow \text{ClusteredFederatedLearning}(\theta^*, c_1)$ \\
\textbullet~$\theta_i^*, i\in c_2\leftarrow \text{ClusteredFederatedLearning}(\theta^*, c_2)$ \\
}
\Else{
\textbullet~$\theta_i^*\leftarrow \theta^*$, $i\in c$
}
\Return $\theta_i^*, i\in c$
\end{algorithm}

\section{Related Work}
\label{sec:litterature}
Federated Learning \cite{mcmahan2016communication}\cite{konevcny2016federated}\cite{caldas2018leaf}\cite{li2019federated}\cite{yang2019federated}\cite{bonawitz2019towards} is currently the dominant framework for distributed training of machine learning models under communication- and privacy constraints. Federated Learning assumes the clients to be congruent, i.e. that one central model can fit all client's distributions at the same time. Different authors have investigated the convergence properties of Federated Learning in congruent iid and non-iid scenarios: \cite{lin2018don},\cite{sattler2018sparse},\cite{sattler2019robust} and \cite{zhao2018federated} perform an empirical investigation, \cite{li2019convergence}, \cite{jiang2018linear}, \cite{yu2018parallel} and \cite{sahu2018convergence} prove convergence guarantees. As argued in section \ref{sec:intro} conventional Federated Learning is not able to deal with the challenges of incongruent data distributions.
Other distributed training frameworks \cite{koloskova2019decentralized1}\cite{koloskova2019decentralized2}\cite{stich2018sparsified}\cite{smith2016cocoa} are facing the same issues.  

The natural framework for dealing with incongruent data is Multi-Task Learning \cite{caruana1997multitask}\cite{jacob2009clustered}\cite{kumar2012learning}. An overview over recent techniques for multi-task learning in deep neural networks can be found in \cite{ruder2017overview}. However all of these techniques are applied in a centralized setting in which all data resides at one location and the server has full control over and knowledge about the optimization process. Smith et al. \cite{smith2017federated} present MOCHA, which extends the multi-task learning approach to the Federated Learning setting, by explicitly modeling client similarity via a correlation matrix. However their method relies on alternating bi-convex optimization and is thus only applicable to convex objective functions and limited in it's ability to scale to massive client populations. 
Corinzia et al. \cite{corinzia2019variational} model the connectivity structure between clients and server as a Bayesian network and perform variational inference during learning. Although their method can handle non-convex models, it is expensive to generalize to large federated networks as the client models are refined sequentially. 

Finally, Ghosh et al. \cite{ghosh2019robust} propose a clustering approach, similar to the one presented in this paper. However their method differs from ours in the following key aspects: Most significantly they use $l2$-distance instead of cosine similarity to determine the distribution similarity of the clients. This approach has the severe limitation that it only works if the client's risk functions are convex and the minima of different clusters are well separated. The $l2$-distance also is not able to distinguish congruent from incongruent settings. This means that the method will incorrectly split up clients in the conventional congruent non-iid setting described in \cite{mcmahan2016communication}. Furthermore, their approach is not adaptive in the sense that the decision whether to cluster or not has to be made after the first communication round. In contrast, our method can be applied to arbitrary Federated Learning problems with non-convex objective functions. We also note that we have provided theoretical considerations that allow a systematic understanding of the novel CFL framework.

%

\section{Implementation Considerations}
\label{sec:implementation}

\subsection{Weight-Updates as generalized Gradients}
\label{sec:weight-updates}
Theorem \ref{theo:separation_theorem} makes a statement about the cosine similarity between \emph{gradients} of the empirical risk function. In Federated Learning however, due to constraints on both the memory of the client devices and their communication budged, instead commonly weight-updates as defined in \eqref{eq:sgd} are computed and communicated. In order to deviate as little as possible from the classical Federated Learning algorithm it would hence be desirable to generalize result \ref{theo:separation_theorem} to weight-updates. It is commonly conjectured (see e.g. \cite{lin2017deep}) that accumulated mini-batch gradients approximate the full-batch gradient of the objective function. Indeed, for a sufficiently smooth loss function and low learning rate, a weight update computed over one epoch approximates the direction of the true gradient since by Taylor approximation we have
\begin{align}
&\nabla_\theta r(\theta_{\tau}+\eta\Delta\theta_{\tau-1}, D_\tau)\\&=\nabla_\theta r(\theta_{\tau}, D_\tau)+\eta\Delta\theta_{\tau-1}\nabla^2_\theta r(\theta_{\tau}, D_\tau)+\mathcal{O}(\|\eta\Delta\theta_{\tau-1}\|^2)\\
&=\nabla_\theta r(\theta_{\tau}, D_\tau)+R
\end{align}
where $R$ can be bounded in norm. Hence, by recursive application of the above result it follows
\begin{align}
\Delta\theta = \sum_{\tau=1}^T\nabla_\theta r(\theta_\tau, D_\tau) \approx \sum_{\tau=1}^T\nabla_\theta r(\theta_1, D_\tau) =\nabla_\theta r(\theta_1,D).
\end{align}  
In the remainder of this work we will compute cosine similarities between weight-updates instead of gradients according to
\begin{align}
\alpha_{i,j}:= \frac{\langle\Delta\theta_i , \Delta\theta_j \rangle}{\|\Delta\theta_i \|\|\Delta\theta_j \|}, i,j\in c
\end{align} 
Our experiments in section \ref{sec:experiments} will demonstrate that computing cosine similarities based on weight-updates in practice surprisingly achieves \emph{even better} separations than computing cosine similarities based on gradients.

\subsection{Preserving Privacy}
\label{sec:privacy}
Every machine learning model carries information about the data it has been trained on. For example the bias term in the last layer of a neural network will typically carry information about the label distribution of the training data. 
Different authors have demonstrated that information about a client's input data ("$x$") can be inferred from the weight-updates it sends to the server via model inversion attacks \cite{bhowmick2018protection}\cite{hitaj2017deep}\cite{fredrikson2015model}\cite{carlini2018secret}\cite{melis2018exploiting}. In privacy sensitive situations it might be necessary to prevent this type of information leakage from clients to server with mechanisms like the ones presented in \cite{bonawitz2017practical}.
Luckily, Clustered Federated Learning can be easily augmented with an encryption mechanism that achieves this end.
As both the cosine similarity between two clients' weight-updates and the norms of these updates are invariant to orthonormal transformations $P$ (such as permutation of the indices),
\begin{align}
\frac{\langle\Delta\theta_i,\Delta\theta_j\rangle}{\|\Delta\theta_i\|\|\Delta\theta_j\|} = \frac{\langle P\Delta\theta_i,P\Delta\theta_j\rangle}{\|P\Delta\theta_i\|\|P\Delta\theta_j\|}
\end{align}
a simple remedy is for all clients to apply such a transformation operator to their updates before communicating them to the server. After the server has averaged the updates from all clients and broadcasted the average back to the clients they simply apply the inverse operation
\begin{align}
\Delta\theta=\frac{1}{n}\sum_{i=1}^n\Delta\theta_i = P^{-1}(\frac{1}{n}\sum_{i=1}^n P\Delta\theta_i)
\end{align}
and the Federated Learning protocol can resume unchanged. Other multi-task learning approaches require direct access to the client's data and hence can not be used together with encryption, which represents a distinct advantage for CFL in privacy sensitive situations.

\subsection{Varying Client Populations and Parameter Trees}
\label{sec:tree}

\begin{figure}
\centering
\includegraphics[width=0.5\textwidth]{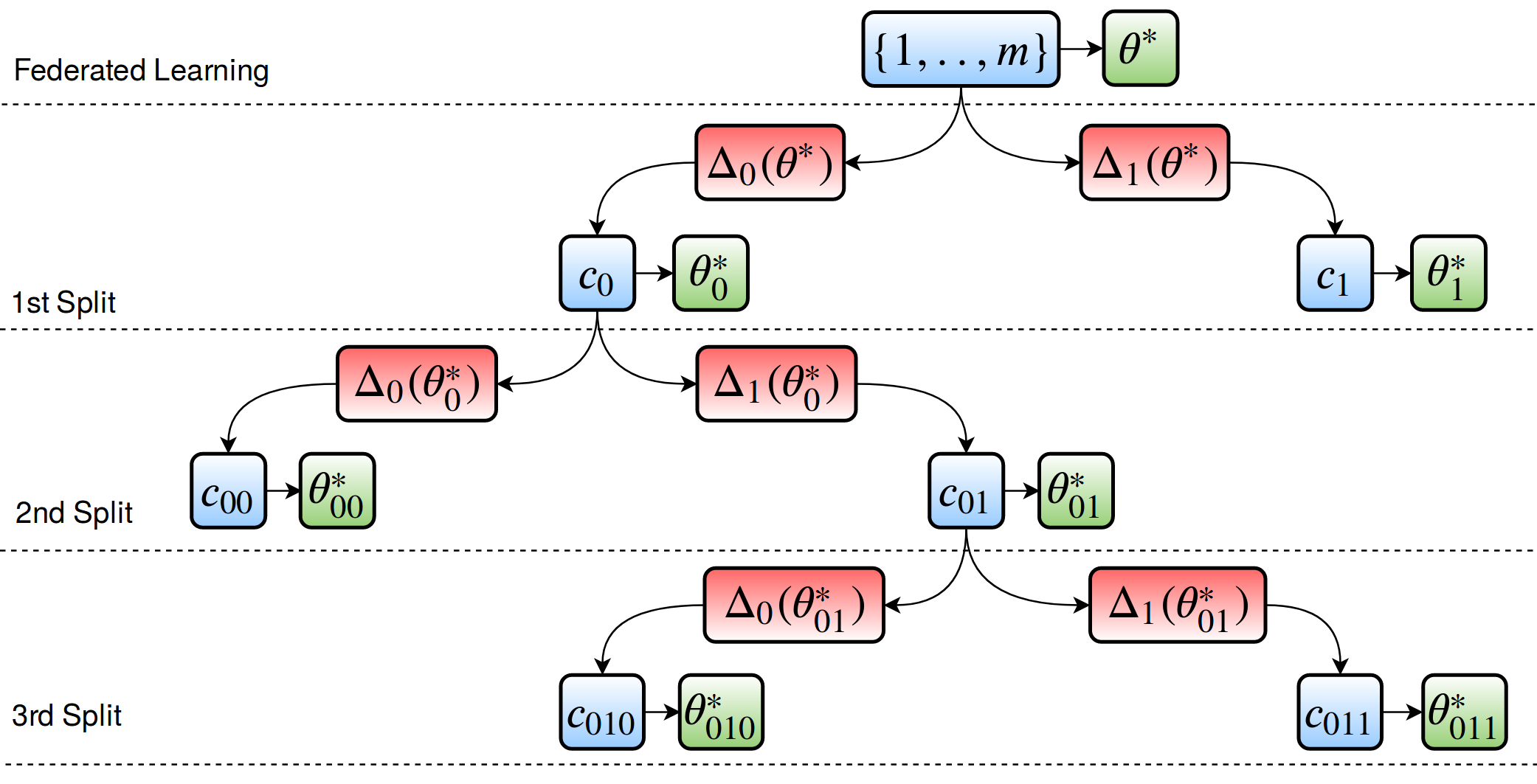}
\caption{An exemplary parameter tree created by Clustered Federated Learning. At the root node resides the conventional Federated Learning model, obtained by converging to a stationary point $\theta^*$ of the FL objective over all clients $\{1,..,m\}$. In the next layer, the client population has been split up into two groups, according to their cosine similarities and every subgroup has again converged to a stationary point $\theta^*_0$ respective $\theta^*_1$. Branching continues recursively until no stationary solution satisfies the splitting criteria. In order to quickly assign new clients to a leaf model, at each edge $e$ of the tree the server caches the pre-split weight-updates $\Delta_e$ of all clients belonging to the two different sub-branches. This way the new client can be moved down the tree along the path of highest similarity.}
\label{fig:tree}
\end{figure}
Up until now we always made the assumption that all clients participate from the beginning of training. Clustered Federated Learning however is flexible enough to handle client populations that vary over time.

In order to incorporate this functionality, the server, while running CFL, needs to build a parameter tree $T=(V,E)$ with the following properties:
\begin{itemize}
\item The tree contains a node $v\in V$ for every (intermediate) cluster $c_v$ computed by CFL
\item Both $c_v$ and the corresponding stationary solution $\theta^*_v$ obtained by running the Federated Learning Algorithm \ref{alg:FL} on cluster $c_v$ are cached at node $v$
\item At the root of the tree $v_{root}$ resides the Federated Learning solution over the entire client population with $c_{v_{root}}=\{1,..,m\}$.
\item If the cluster $c_{v_{child}}$ was created by bi-partitioning the cluster $c_{v_{parent}}$ in CFL then the nodes $v_{parent}$ and $v_{child}$ are connected via a directed edge $e\in E$
\item At every edge $e(v_{parent}\rightarrow v_{child})$ the pre-split weight-updates of the children clients
\begin{align}
\Delta_e=\{\text{SGD}_n(\theta^*_{v_{parent}}, D_i)-\theta_{v_{parent}^*}|i\in c_{v_{child}}\}
\end{align}
are cached

\end{itemize}
An exemplary parameter tree is shown in Figure \ref{fig:tree}.
When a new client joins the training it can get assigned to a leaf cluster by iteratively traversing the parameter tree from the root to a leaf, always moving to the branch which contains the more similar client updates according to Algorithm \ref{alg:tree}.

Another feature of building a parameter tree is that it allows the server to provide every client with \emph{multiple} models at varying specificity. On the path from root to leaf, the models get more specialized with the most general model being the FL model at the root. Depending on application and context, a CFL client could switch between models of different generality. Furthermore a parameter tree allows us to ensemble multiple models of different specificity together. We believe that investigations along those lines are a promising direction of future research.

\ \\
Putting all pieces from the previous sections together, we arrive at a protocol for general privacy-preserving CFL which is described in Algorithm \ref{alg:cryptoCFL}

\begin{algorithm}
\caption{Assigning new Clients to a Cluster}
\label{alg:tree}
\DontPrintSemicolon
\textbf{Input:} new client with data $D_{new}$, parameter tree $T=(V,E)$ \\
\textbullet~ $v\leftarrow v_{root}$\\
\While{$|\textnormal{Children}(v)|>0$}{
\textbullet~ $v_0, v_1 \leftarrow \text{Children}(v)$\\
\textbullet~ $\Delta\theta_{new}\leftarrow\text{SGD}_n(\theta^*_v, D_{new})-\theta^*_v$\\
\textbullet~ $\alpha_0\leftarrow \max_{\Delta\theta\in \Delta_{(v\rightarrow v_1)}}\alpha(\Delta\theta_{new}, \Delta\theta)$\\
\textbullet~ $\alpha_1\leftarrow \max_{\Delta\theta\in \Delta_{(v\rightarrow v_2)}}\alpha(\Delta\theta_{new}, \Delta\theta)$\\
\uIf{$\alpha_0>\alpha_1$}{
\textbullet~ $v\leftarrow v_0$\\
}
\Else{
\textbullet~ $v\leftarrow v_1$\\
}
}
\Return $c_v$, $\theta^*_v$
\end{algorithm}
\begin{algorithm}
\caption{Clustered Federated Learning with Privacy Preservation and Weight-Updates}
\label{alg:cryptoCFL}
\DontPrintSemicolon
\textbf{input:} initial parameters $\theta_0$, branching parameters $\varepsilon_1, \varepsilon_2 > 0$, empirical risk approximation error bound $\gamma_{max}\in[0,1)$, number of local iterations/ epochs $n$\\
\textbf{outout:} improved parameters on every client $\theta_i$\\
\textbf{init:} set initial clusters $\mathcal{C}=\{ \{ 1,..,m \} \}$, set initial models $\theta_i\leftarrow\theta_0$  $\forall i=1,..,m$, set initial update $\Delta\theta_c\leftarrow 0$ $\forall c\in \mathcal{C}$, clients exchange random seed to create permutation operator $P$ (optional, otherwise set $P$ to be the identity mapping)\\
\While{not converged}{
\For{$i=1,..,m$ \textbf{in parallel}}{
\underline{Client $i$ does:}\\
\textbullet~ $\theta_i \leftarrow \theta_i+P^{-1}\Delta\theta_{c(i)}$\\
\textbullet~ $\Delta \theta_i \leftarrow P(\text{SGD}_n(\theta_i, D_i)-\theta_i)$ \\
}

\underline{Server does:}\\
\textbullet~ $\mathcal{C}_{tmp} \leftarrow \mathcal{C}$\\
\For{$c \in \mathcal{C}$}{
\textbullet~ $\Delta\theta_c \leftarrow \frac{1}{|c|}\sum_{i\in c}\Delta \theta_i$ \\
\If{$\|\Delta\theta_c\|<\varepsilon_1$  \textbf{and} $\max_{i\in c}\|\Delta \theta_i\|>\varepsilon_2$}{
\textbullet~ $\alpha_{i,j}\leftarrow\frac{\langle \Delta \theta_i,\Delta \theta_j\rangle}{\|\Delta \theta_i\|\|\Delta \theta_j\|}$\\

\textbullet~ $c_1,c_2\leftarrow \arg\min_{c_1\cup c_2 = c}(\max_{i\in c_1, j\in c_2} \alpha_{i,j})$\\
\textbullet~ $\alpha_{cross}^{max} \leftarrow \max_{i\in c_1, j\in c_2}\alpha_{i,j}$

\If{$\gamma_{max} < \sqrt{\frac{1-\alpha_{cross}^{max}}{2}}$}{
\textbullet~ $\mathcal{C}_{tmp}\leftarrow(\mathcal{C}_{tmp}\setminus c) \cup c_1\cup c_2$
}
}
}
\textbullet~ $\mathcal{C} \leftarrow \mathcal{C}_{tmp}$\\
}
\Return $\theta$

\end{algorithm}

%
%

\section{Experiments}
\label{sec:experiments}

\subsection{Practical Considerations}
\label{sec:practical}
In section \ref{sec:theory} we showed that the cosine similarity criterion does distinguish different incongruent clients under three conditions: (a) Federated Learning has converged to a stationary point $\theta^*$, (b) Every client holds enough data s.t. the empirical risk approximates the true risk, (c) cosine similarity is computed between the full gradients of the empirical risk. In this section we will demonstrate that in practical problems none of these conditions have to be fully satisfied. Instead, we will find that CFL is able to correctly infer the clustering structure even if clients only hold small datasets and are trained to an approximately stationary solution of the Federated Learning objective. Furthermore we will see that cosine similarity can be computed between weight-updates instead of full gradients, which even improves performance. 

In the experiments of this section we consider the following Federated Learning setup: All experiments are performed on either the MNIST \cite{lecun1998mnist} or CIFAR-10 \cite{krizhevsky2014cifar} dataset using $m=20$ clients, each of which belonging to one of $k=4$ clusters. Every client is assigned an equally sized random subset of the total training data. To simulate an incongruent clustering structure, every clients' data is then modified by randomly swapping out two labels, depending on which cluster a client belongs to. For example, in all clients belonging to the first cluster, data points labeled as "1" could be relabeled as "7" and vice versa, in all clients belonging to the second cluster "3" and "5" could be switched out in the same way, and so on. This relabeling ensures that both $\varphi(x)$ and $\varphi(y)$ are approximately the same across all clients, but the conditionals $\varphi(y|x)$ diverge between different clusters. We will refer to this as "label-swap augmentation" in the following. In all experiments we train multi-layer convolutional neural networks and adopt a standard Federated Learning strategy with 3 local epochs of training. 
We report the separation gap (Definition \ref{def:separationgap})
\begin{align}
g(\alpha):=\alpha_{intra}^{min}-\alpha_{cross}^{max}
\end{align}
which according to Corollary \ref{corr:1} tells us whether CFL will correctly bi-partition the clients:
\begin{align}
g(\alpha)>0 \Leftrightarrow \text{"Correct Clustering"}
\end{align}

\textbf{Number of Data points:}
We start out by investigating the effects of data set size on the cosine similarity. We randomly subsample from each client's training data to vary the number of data points on every client between 10 and 200 for MNIST and 100 and 2400 for CIFAR. For every different local data set size we run Federated Learning for 50 communication rounds, after which training progress has come mostly to halt and we can expect to be close to a stationary point. After round 50, we compute the pairwise cosine similarities between the weight-updates and the separation gap $g(\alpha)$. The results are shown in Figure \ref{fig:ndata}. As expected, $g(\alpha)$ grows monotonically with increasing data set size. On the MNIST problem as little as 20 data points on every client are sufficient to achieve correct bi-partitioning in the sense of Definition \ref{def:1}. On the more difficult CIFAR problem a higher number of around 500 data points is necessary to achieve correct bi-partitioning.  
\begin{figure}
\centering
\includegraphics[width=0.5\textwidth]{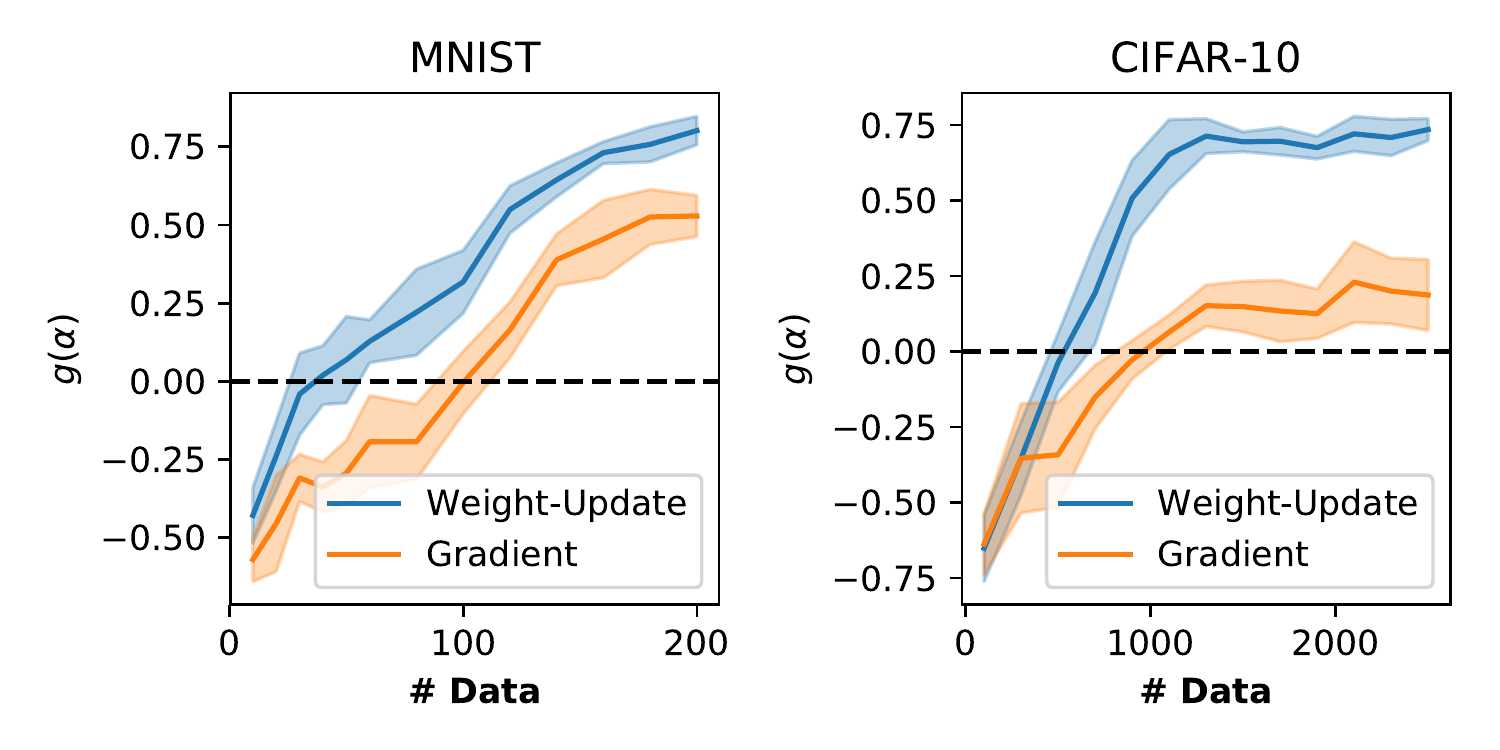}
\caption{Separation gap $g(\alpha)$ as a function of the number of data points on every client for the label-swap problem on MNIST and CIFAR. From Corollary \ref{corr:1} we know that CFL will always find a correct bi-partitioning if $g(\alpha)>0$. On MNIST this is already satisfied if clients hold as little as 20 data points if weight-updates are used for the computation of the similarity $\alpha$.}
\label{fig:ndata}
\end{figure}

\textbf{Proximity to Stationary Solution:}
\label{sec:nrounds}
Next, we investigate the importance of proximity to a stationary point $\theta^*$ for the clustering. Under the same setting as in the previous experiment we reduce the number of data points on every client to 100 for MNIST and to 1500 for CIFAR and compute the pairwise cosine similarities and the separation gap after each of the first 50 communication rounds. The results are shown in Figure \ref{fig:nrounds}. Again, we see that the separation quality monotonically increases with the number of communication rounds. On MNIST and CIFAR as little as 10 communication rounds are necessary to obtain a correct clustering. 
\begin{figure}
\centering
\includegraphics[width=0.5\textwidth]{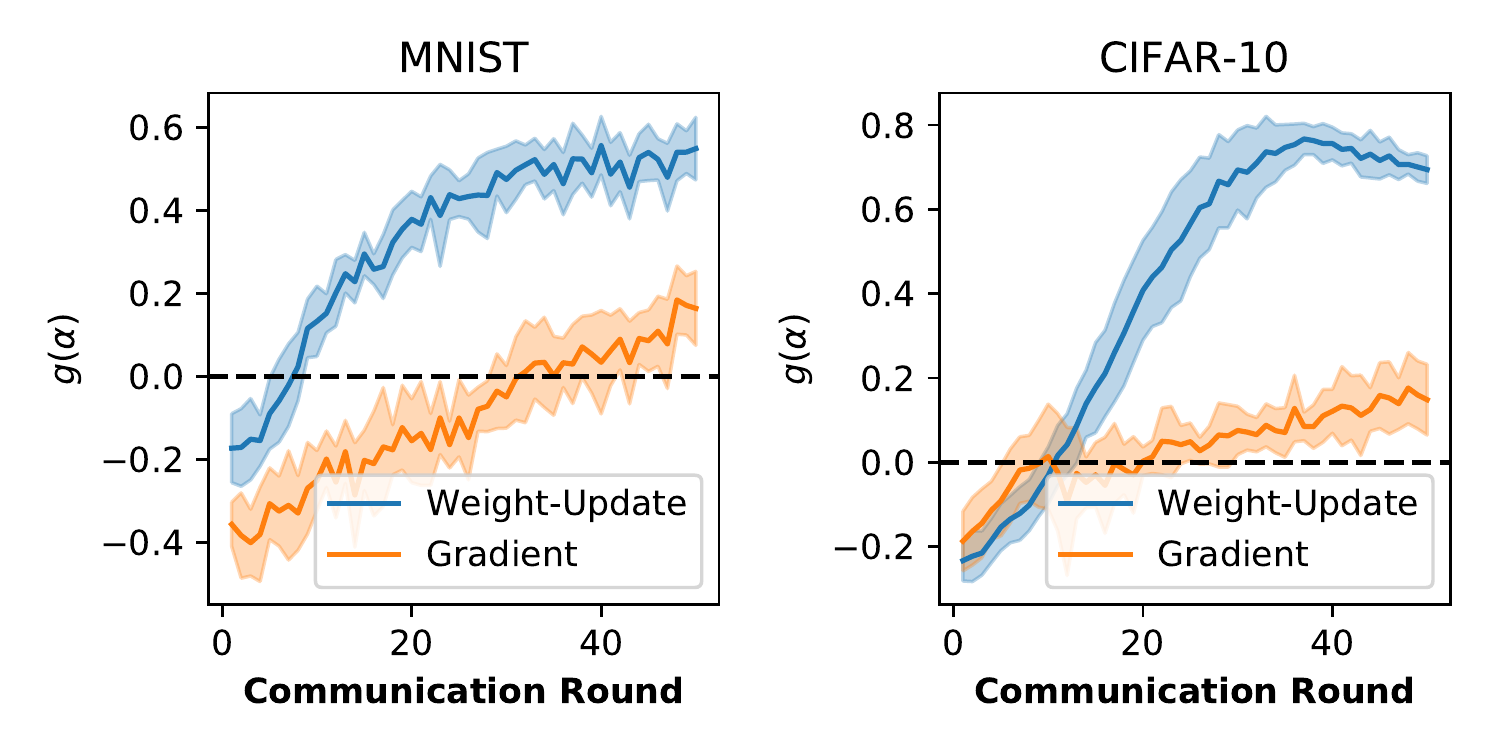}
\caption{Separation gap $g(\alpha)$ as a function of the number of communication rounds for the label-swap problem on MNIST and CIFAR. The separation quality monotonically increases with the number of communication rounds of Federated Learning. Correct separation in both cases is already achieved after around 10 communication rounds if $\alpha$ is computed using weight-updates.}
\label{fig:nrounds}
\end{figure}

\textbf{Weight-Updates instead of Gradients:}
In both the above experiments we computed the cosine similarities $\alpha$ based on either the full gradients
\begin{align}
\alpha_{i,j}=\frac{\langle \nabla_\theta r_i(\theta),\nabla_\theta r_j(\theta) \rangle}{\|\nabla_\theta r_i(\theta)\|\|\nabla_\theta r_j(\theta)\|} ~~~~~\text{   ("Gradient") } 
\end{align}
or Federated weight-updates 
\begin{align}
\alpha_{i,j}=\frac{\langle \Delta\theta_i,\Delta\theta_j \rangle}{\|\Delta\theta_i\|\|\Delta\theta_j\|}~~~~~\text{   ("Weight-Update") } 
\end{align}
over 3 epochs. Interestingly, weight-updates seem to provide even better separation $g(\alpha)$ with fewer data points and at a greater distance to a stationary solution.
This comes in very handy as it allows us to leave the Federated Learning communication protocol unchanged. In all following experiments we will compute cosine similarities based on weight-updates instead of gradients.

\subsection{Distinguishing Congruent and Incongruent Clients}
\label{sec:criteria}

\begin{figure}
\centering
\includegraphics[width=0.4\textwidth]{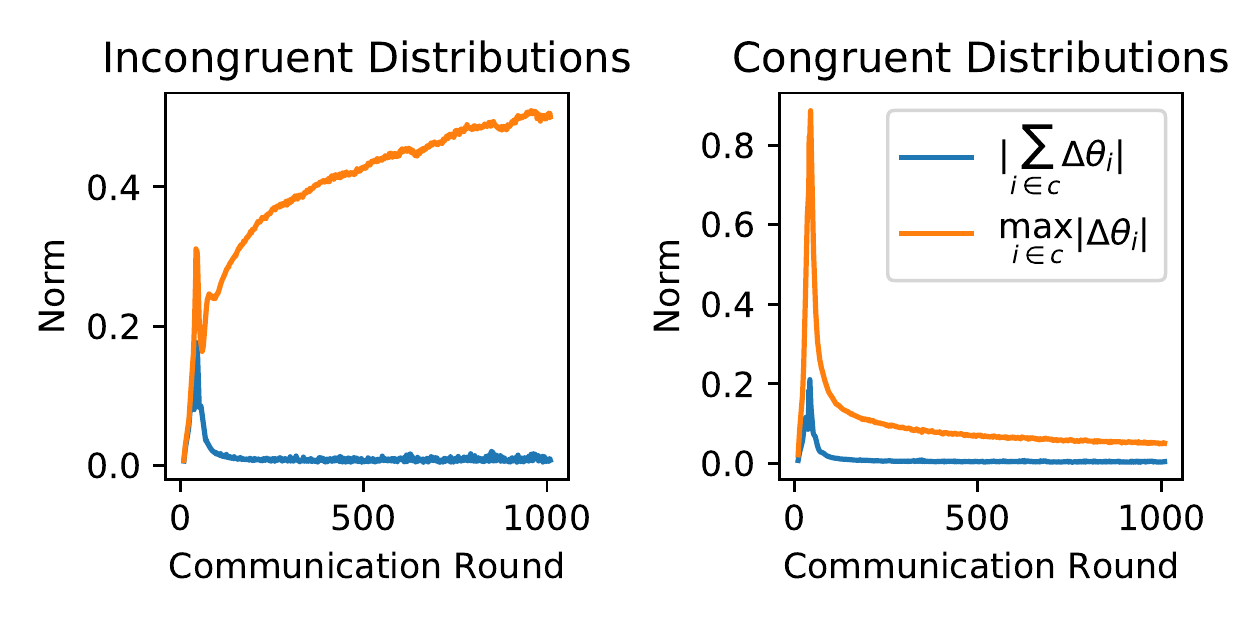}
\caption{Experimental verification of the norm criteria \eqref{eq:clientnorm} and \eqref{eq:servernorm}. Displayed is the development of gradient norms over the course of 1000 communication rounds of Federated Learning with two clients holding data from incongruent (left) and congruent distributions (right). In both cases Federated Learning converges to a stationary point of $F(\theta)$ and the average update norm \eqref{eq:servernorm} goes to zero. In the congruent case the maximum norm of the client updates \eqref{eq:clientnorm} decreases along with the server update norm, while in contrast in the incongruent case it stagnates and even increases.}
\label{fig:norms}
\end{figure}
In this subsection, we experimentally verify the validity of the clustering criteria \eqref{eq:servernorm} and \eqref{eq:clientnorm} in a Federated Learning experiment on MNIST with two clients holding data from incongruent and congruent distributions. In the congruent case client one holds all training digits "0" to "4" and client two holds all training digits "5" to "9". In the incongruent case, both clients hold a random subset of the training data, but the distributions are modified according to the "label swap" rule described above. Figure \ref{fig:norms} shows the development of the average update norm (equation \eqref{eq:servernorm}) and the maximum client norm (equation \eqref{eq:clientnorm}) over the course of 1000 communication rounds. As predicted by the theory, in the congruent case the average client norm converges to zero, while in the incongruent case it stagnates and even increases over time. In both cases the server norm tends to zero, indicating convergence to a stationary point (see Figure \ref{fig:norms}). 


%
%
%
%
\subsection{Clustered Federated Learning}
In this section, we apply CFL as described in Algorithm \ref{alg:cryptoCFL} to different Federated Learning setups, which are inspired by our motivating examples in the introduction. In all experiments, the clients perform 3 epochs of local training at a batch-size of 100 in every communication round. 


\begin{figure}
\centering
\includegraphics[width=0.5\textwidth]{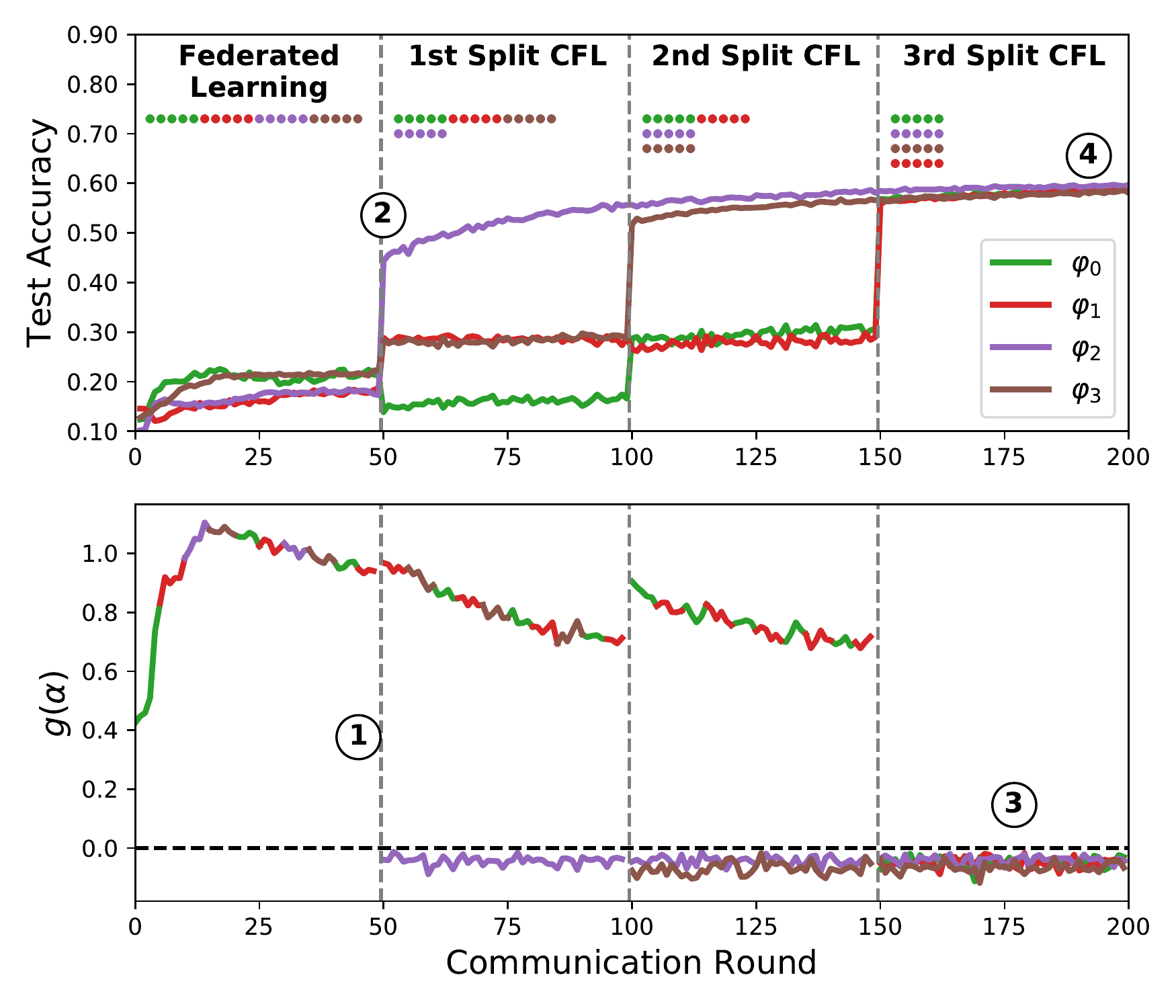}
\caption{CFL applied to the "permuted labels problem" on CIFAR with 20 clients and 4 different permutations. The top plot shows the accuracy of the trained model(s) on their corresponding validation sets. The bottom plot shows the separation gaps $g(\alpha)$ for all different clusters. After an initial 50 communication rounds a large separation gap has developed and a first split separates out the purple group of clients, which leads to an immediate drastic increase of accuracy for these clients. In communication rounds 100 and 150 this step is repeated until all clients with incongruent distributions have been separated. After the third split, the model accuracy for all clients has more than doubled and the separation gaps in all clusters have dropped to below zero which indicates that the clustering is finalized.}
\label{fig:cifar_shifty}
\end{figure}
\begin{figure}
\centering
\includegraphics[width=0.5\textwidth]{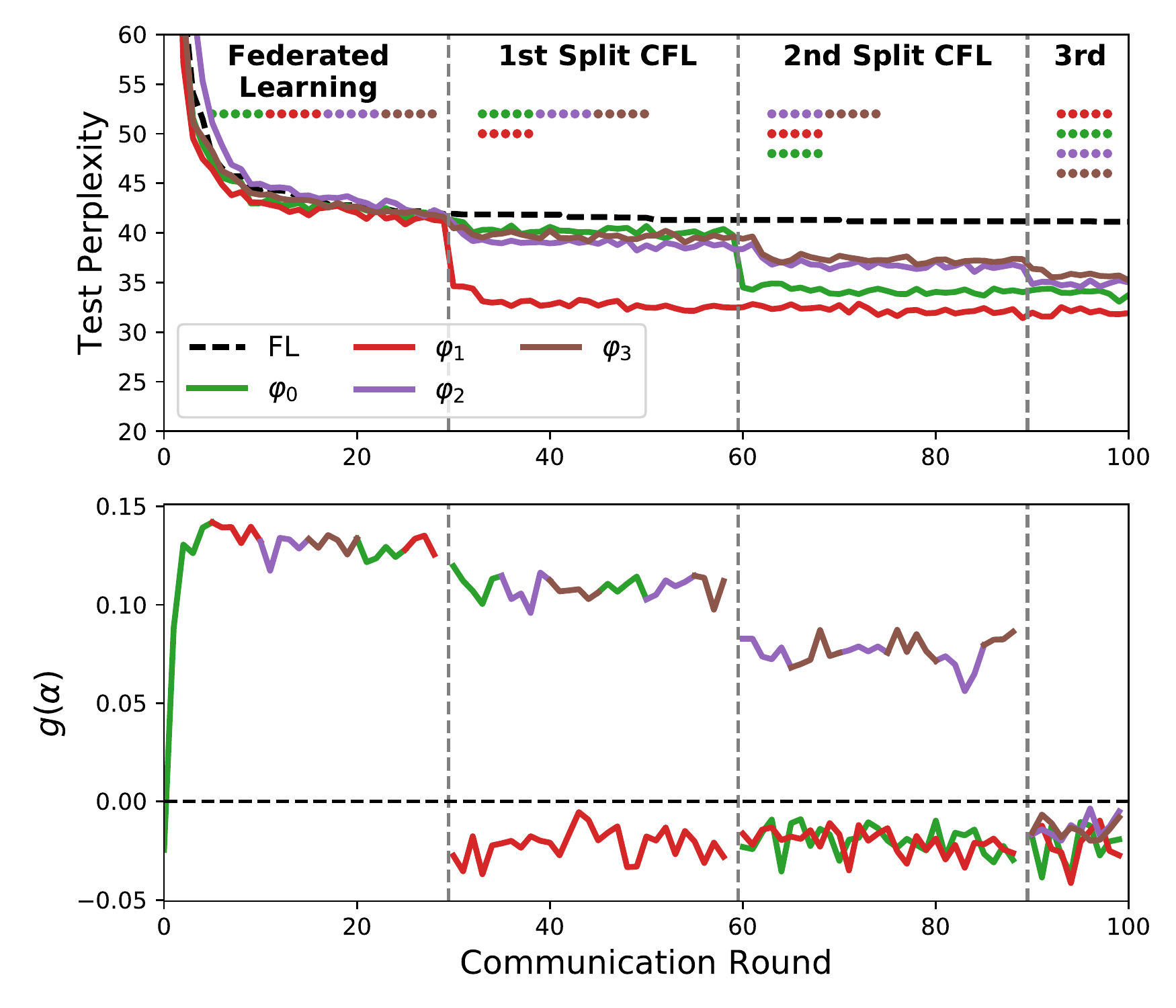}
\caption{CFL applied to the Ag-News problem. The top plot shows the perplexity achieved by the different clients on their local test set (lower is better). The clients are separated in communication rounds 30, 60 and 90. After the final separation the perplexity of all clients on their local test set has dropped to less than 36, while the Federated Learning solution (black, dotted) still stagnates at a perplexity of 42. }
\label{fig:agnews}
\end{figure}

\textbf{Label permutation on Cifar-10}:  We split the CIFAR-10 training data randomly and evenly among $m=20$ clients, which we group into $k=4$ different clusters. All clients belonging to the same cluster apply the same random permutation $P_{c(i)}$ to their labels such that their modified training and test data is given by 
\begin{align}
\hat{D_i}=\{(x,P_{c(i)}(y))|(x,y)\in D_i\}
\end{align}
respective
\begin{align}
\hat{D^{test}_i} =\{(x,P_{c(i)}(y))|(x,y)\in D^{test}\}.
\end{align}
The clients then jointly train a 5-layer convolutional neural network on the modified data using CFL with 3 epochs of local training at a batch-size of 100. Figure \ref{fig:cifar_shifty} (top) shows the joint training progression: In the first 50 communication rounds, all clients train one single model together, following the conventional Federated Learning protocol. After these initial 50 rounds, training has converged to a stationary point of the Federated Learning objective and the client test accuracies stagnate at around 20\%. Conventional Federated Learning would be finalized at this point. At the same time, we observe (Figure \ref{fig:cifar_shifty}, bottom) that a distinct gap $g(\alpha)=\alpha_{intra}^{min}-\alpha_{cross}^{max}$ has developed (\circled{1}), indicating an underlying clustering structure. In communication round 50 the client population is therefore split up for the first time, which leads to an immediate 25\% increase in validation accuracy for all clients belonging to the "purple" cluster which was separated out \circled{2}. Splitting is repeated in communication rounds 100 and 150 until all clusters have been separated and $g(\alpha)$ has dropped to below zero in all clusters (\circled{3}), which indicates that clustering is finalized. At this point the accuracy of all clients has more than doubled the one achieved by the Federated Learning solution and is now at close to 60\% \circled{4}. This underlines, that after standard FL, our novel CFL can detect, the necessity for subsequent splitting and clustering which enables arriving at significantly higher performance. In addition, the cluster structure found can potentially be illuminating as it provides interesting insight about the composition of the complex underlying data distribution.

\textbf{Language Modeling on Ag-News}: The Ag-News corpus is a collection of 120000 news articles belonging to one of the four topics 'World', 'Sports', 'Business' and 'Sci/Tech'. We split the corpus into 20 different sub-corpora of the same size, with every sub-corpus containing only articles from one topic and assign every corpus to one client. Consequently the clients form four clusters based on what type of articles they hold. Every Client trains a two-layer LSTM network to predict the next word on it's local corpus of articles. Figure \ref{fig:agnews} shows 100 communication rounds of multi-stage CFL applied to this distributed learning problem. As we can see, Federated Learning again converges to a stationary solution after around 30 communication rounds. At this solution all clients achieve a perplexity of around 43 on their local test set. After the client population has been split up in communication rounds 30, 60 and 90, the four true underlying clusters are discovered. After the 100th communication round the perplexity of all clients has dropped to less than 36. The Federated Learning solution, trained over the same amount of communication rounds, still stagnates at an average perplexity of 42.

\section{Conclusion}
In this paper we presented Clustered Federated Learning, a framework for Federated Multi-Task Learning that can improve any existing Federated Learning Framework by enabling the participating clients to learn more specialized models. 
Clustered Federated Learning makes use of our theoretical finding, that (at any stationary solution of the Federated Learning objective) the cosine similarity between the weight-updates of different clients is highly indicative of the similarity of their data distributions.
This crucial insight allows us to provide strong mathematic guarantees on the clustering quality under mild assumptions on the clients and their data, even for arbitrary non-convex objectives.

We demonstrated that CFL can be implemented in a privacy preserving way and without having to modify the FL communication protocol.
Moreover, CFL is able to distinguish situations in which a single model can be learned from the clients' data from those in which this is not possible and only separates clients in the latter case. 

Our experiments on convolutional and recurrent deep neural networks show that CFL can achieve drastic improvements over the Federated Learning baseline in terms of classification accuracy / perplexity in situations where the clients' data exhibits a clustering structure.

In future work we plan to further investigate the utility of weight-updates for distribution similarity estimation. Another interesting direction of further research is whether our results will also generalize to compressed weight-updates \cite{caldas2018expanding}\cite{konevcny2016federated2}\cite{wiedemann2019deepcabac} or compact parameter representations \cite{wiedemann2019compact}.

\bibliographystyle{IEEEtran}
\bibliography{sample.bib}

\section{Supplement}
\subsection{Proving the Separation Theorem}

The Separation Theorem \ref{theo:separation_theorem} makes a statement about the cosine similarities between the gradients of the empirical risk functions $\nabla_\theta r_i(\theta^*)$ and $\nabla_\theta r_j(\theta^*)$, which are noisy approximations of the true risk gradients $\nabla_\theta R_{I(i)}(\theta^*)$, respective $\nabla_\theta R_{I(j)}(\theta^*)$. To simplify the notation let us first re-define 
\begin{align}
v_l=\nabla_\theta R_l(\theta^*), l=1,..,k
\end{align}
and
\begin{align}
X_i=\nabla_\theta r_i(\theta^*)-\nabla_\theta R_{I(i)}(\theta^*), i=1,..,m
\end{align}
Figure \ref{fig:proof_overview} shows a possible configuration in $d=2$ with $k=3$ different data generating distributions and their corresponding gradients $v_1$, $v_2$ and $v_3$. The empirical risk gradients $X_i+v_{i(i)}$, $i=1,..,m$ are shown as dashed lines. The maximum angles between gradients from the same data generating distribution are shown green, blue and purple in the plot. Among these, the green angle is the largest one $\sphericalangle_{intra}^{max}$. The plot also shows the optimal bi-partitioning into clusters 1 and 2 and the minimum angle between the gradient updates from any two clients in different clusters $\sphericalangle_{cross}^{min}$ is displayed in red. As long as 
\begin{align}
\sphericalangle_{intra}^{max}<\sphericalangle_{cross}^{min}
\end{align}
or equivalently
\begin{align}
\alpha_{intra}^{min}=\cos(\sphericalangle_{intra}^{max})>\cos(\sphericalangle_{cross}^{min})=\alpha_{cross}^{max}
\end{align}
  the clustering will always be correct.

\begin{figure}
\centering
\includegraphics[width=0.5\textwidth]{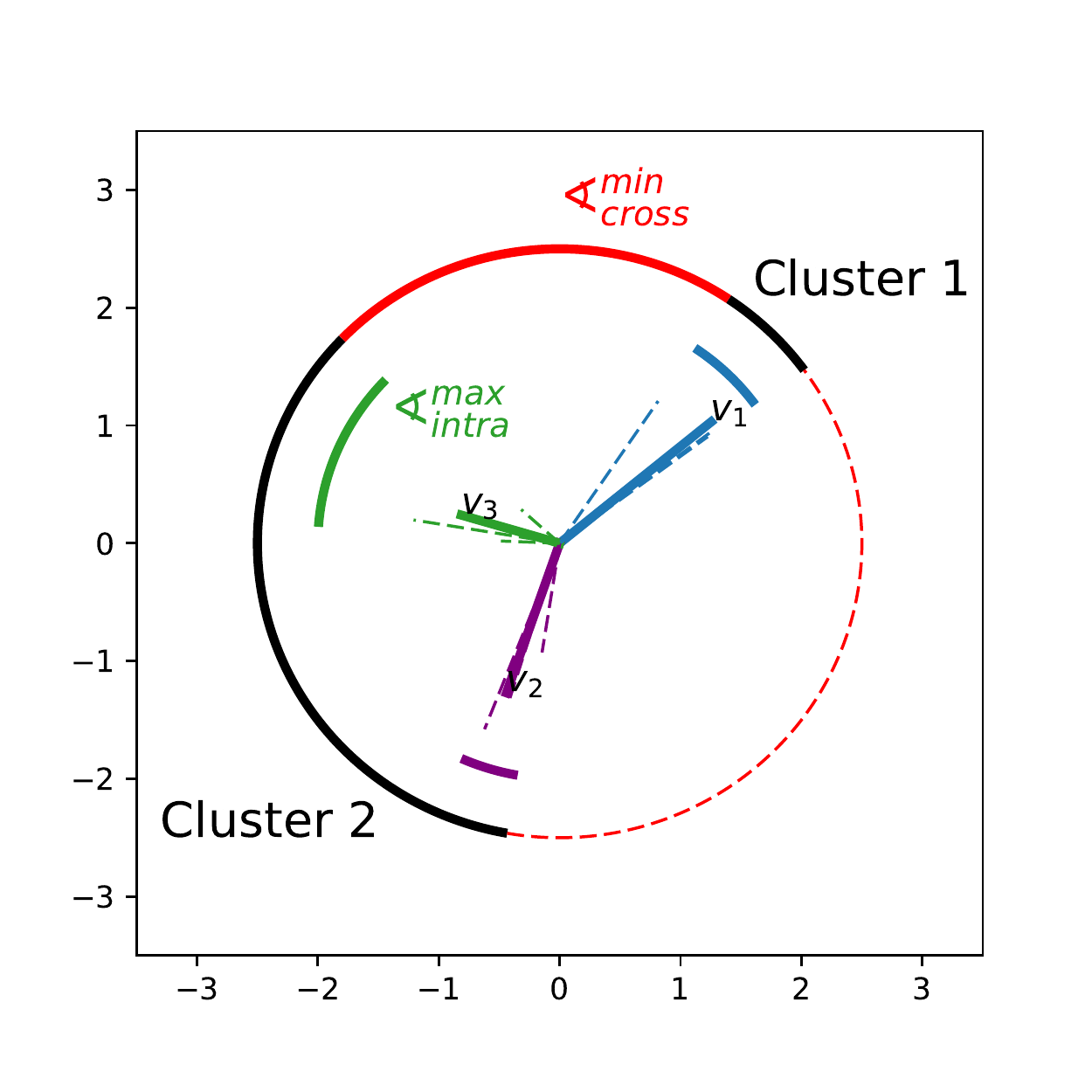}
\caption{Possible configuration in $d=2$ with $k=3$ different data generating distributions and their corresponding gradients $v_1$, $v_2$ and $v_3$. The empirical risk gradients $X_i+v_{i(i)}$, $i=1,..,m$ are shown as dashed lines. The maximum angles between gradients from the same data generating distribution are shown green, blue and purple in the plot. Among these, the green angle is the largest one $\sphericalangle_{intra}^{max}$. The vectors are optimally bi-partitioned into clusters 1 and 2 and the minimum angle between the gradient updates from any two clients in different clusters $\sphericalangle_{cross}^{min}$ is displayed in red.}
\label{fig:proof_overview}
\end{figure}

The proof of Theorem \ref{theo:separation_theorem} can be organized into three separate steps:
\begin{itemize}
\item In Lemma \ref{lemma:same}, we bound the cosine similarity between two noisy approximations of the same vector $\alpha_{intra}^{min}$ from below
\item In Lemma \ref{lemma:different}, we bound the cosine similarity between two noisy approximations of two different vectors from above
\item In Lemma \ref{lemma:kcluster}, we show that every set of vectors that sums to zero can be separated into two groups such that the cosine similarity between any two vectors from separate groups can be bounded from above
\item Lemma \ref{lemma:different} and \ref{lemma:kcluster} together will allow us to bound the cross cluster similarity $\alpha_{cross}^{max}$ from above
\end{itemize} 
\begin{lemma}
\label{lemma:same}
Let $v,X, Y\in\mathbb{R}^d$ with $\|X\|<\|v\|$ and $\|Y\|<\|v\|$ then
\begin{align}
\alpha(v+X,v+Y)\geq -\frac{\|X\|\|Y\|}{\|v\|^2}+\sqrt{1-\frac{\|X\|^2}{\|v\|^2}}\sqrt{1-\frac{\|Y\|^2}{\|v\|^2}}.
\end{align}
\end{lemma}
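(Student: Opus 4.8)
The plan is to reduce the statement, via the scale-invariance of the cosine similarity, to an elementary estimate about angles between vectors, and then to combine that estimate with the triangle inequality for the angular metric. First I would normalize: since $\alpha(\cdot,\cdot)$ is invariant under positive rescaling of either argument, I may assume $\|v\|=1$ and set $a:=\|X\|<1$, $b:=\|Y\|<1$; note that $\|X\|<\|v\|$ forces $v\neq 0$ and $v+X\neq 0$, so every cosine similarity below is well defined. It then suffices to show $\alpha(v+X,v+Y)\ge -ab+\sqrt{1-a^2}\sqrt{1-b^2}$, where I write $\angle(p,q)$ for the angle between $p$ and $q$.

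The core geometric lemma is that a perturbation of norm $a<1$ cannot rotate $v$ by more than $\arcsin a$. To see this, apply the law of sines in the planar triangle with vertices $0$, $v$, $v+X$, whose side lengths are $\|v\|=1$, $\|X\|=a$ (opposite the vertex $0$), and $\|v+X\|$: writing $\theta_X:=\angle(v,v+X)$ for the angle at $0$ and $\psi$ for the angle at the vertex $v+X$, one gets $\sin\theta_X = a\,\sin\psi \le a$. Moreover $\theta_X$ cannot be $\ge \pi/2$, since then the side of length $a$ opposite it would be the longest side of the triangle, contradicting $a<1=\|v\|$. Hence $\theta_X\in[0,\pi/2)$, so $\theta_X\le\arcsin a$ and $\cos\theta_X\ge\sqrt{1-a^2}$; symmetrically $\theta_Y:=\angle(v,v+Y)\le\arcsin b$.

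Next I would invoke the triangle inequality for the geodesic metric $d(p,q):=\arccos\alpha(p,q)$ on rays through the origin (equivalently, the intrinsic distance on $S^{d-1}$), which gives $\angle(v+X,v+Y)\le \theta_X+\theta_Y\le \arcsin a+\arcsin b<\pi$. Since $\cos$ is decreasing on $[0,\pi]$, this yields $\alpha(v+X,v+Y)=\cos\angle(v+X,v+Y)\ge \cos\!\big(\arcsin a+\arcsin b\big)$; expanding the right-hand side with the angle-addition formula and $\cos\arcsin t=\sqrt{1-t^2}$ gives $\sqrt{1-a^2}\sqrt{1-b^2}-ab$, which is exactly the claimed bound after substituting back $a=\|X\|/\|v\|$ and $b=\|Y\|/\|v\|$.

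I expect the only nonroutine point to be the spherical triangle inequality $\angle(v+X,v+Y)\le\angle(v+X,v)+\angle(v,v+Y)$: it is standard, but to keep the argument self-contained one can prove it directly, e.g.\ by projecting the three vectors onto a suitable $2$-plane and comparing with the planar case, or by citing that $\arccos\langle \hat p,\hat q\rangle$ on unit vectors is the length of the shortest great-circle arc. Everything else --- the law of sines, the monotonicity of $\cos$, and the angle-addition identity --- is elementary bookkeeping, and the strictness of the hypotheses $\|X\|<\|v\|$, $\|Y\|<\|v\|$ is used precisely to guarantee $\theta_X,\theta_Y<\pi/2$ so that the two angular bounds sum to strictly less than $\pi$.
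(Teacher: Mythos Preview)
Your proof is correct and follows essentially the same route as the paper: both arguments bound $\angle(v,v+X)\le\arcsin(\|X\|/\|v\|)$ (the paper via the tangent-line extremal configuration, you via the law of sines), then combine the two single-perturbation bounds through the angular triangle inequality and the cosine addition formula. Your presentation is arguably more self-contained, since you justify the bound $\theta_X<\pi/2$ and invoke the spherical triangle inequality explicitly rather than asserting the worst-case geometry.
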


\begin{proof}
We are interested in vectors $X$ and $Y$ which maximize the angle between $v+X$ and $v+Y$. Since \begin{align}
\alpha(v+X,v+Y) = \cos(\sphericalangle(v+X,v+Y))
\end{align}
and $\cos$ is monotonically decreasing on $[0,\pi]$ such $X$ and $Y$ will minimize the cosine similarity $\alpha$. As $\|X\|<\|v\|$ and $\|Y\|<\|v\|$ the angle will be maximized if and only if $v$, $X$ and $Y$ share a common 2-dimensional hyperplane, $X$ is perpendicular to $v+X$ and $Y$ is perpendicular to $v+Y$ and $X$ and $Y$ point into opposite directions (Figure \ref{fig:proof_maxangle}). It then holds by the trigonometric property of the sine that
\begin{align}
\sin(\sphericalangle(v,v+X)) = \frac{\|X\|}{\|v\|}
\end{align}  
and
\begin{align}
\sin(\sphericalangle(v,v+Y)) = \frac{\|Y\|}{\|v\|}
\end{align} 
and hence 
\begin{align}
\cos(\sphericalangle(v+X,v+Y))&= \cos(\sphericalangle(v+X)+\sphericalangle(v+Y))\\
&\geq \cos(\sin^{-1}(\frac{\|X\|}{\|v\|})+\sin^{-1}(\frac{\|Y\|}{\|v\|})).
\end{align}
Since 
\begin{align}
\cos(\sin^{-1}(x)+\sin^{-1}(y))=-xy+\sqrt{1-x^2}\sqrt{1-y^2}
\end{align}
the result follows after re-arranging terms.

\end{proof}


\begin{figure}
\centering
\includegraphics[width=0.5\textwidth]{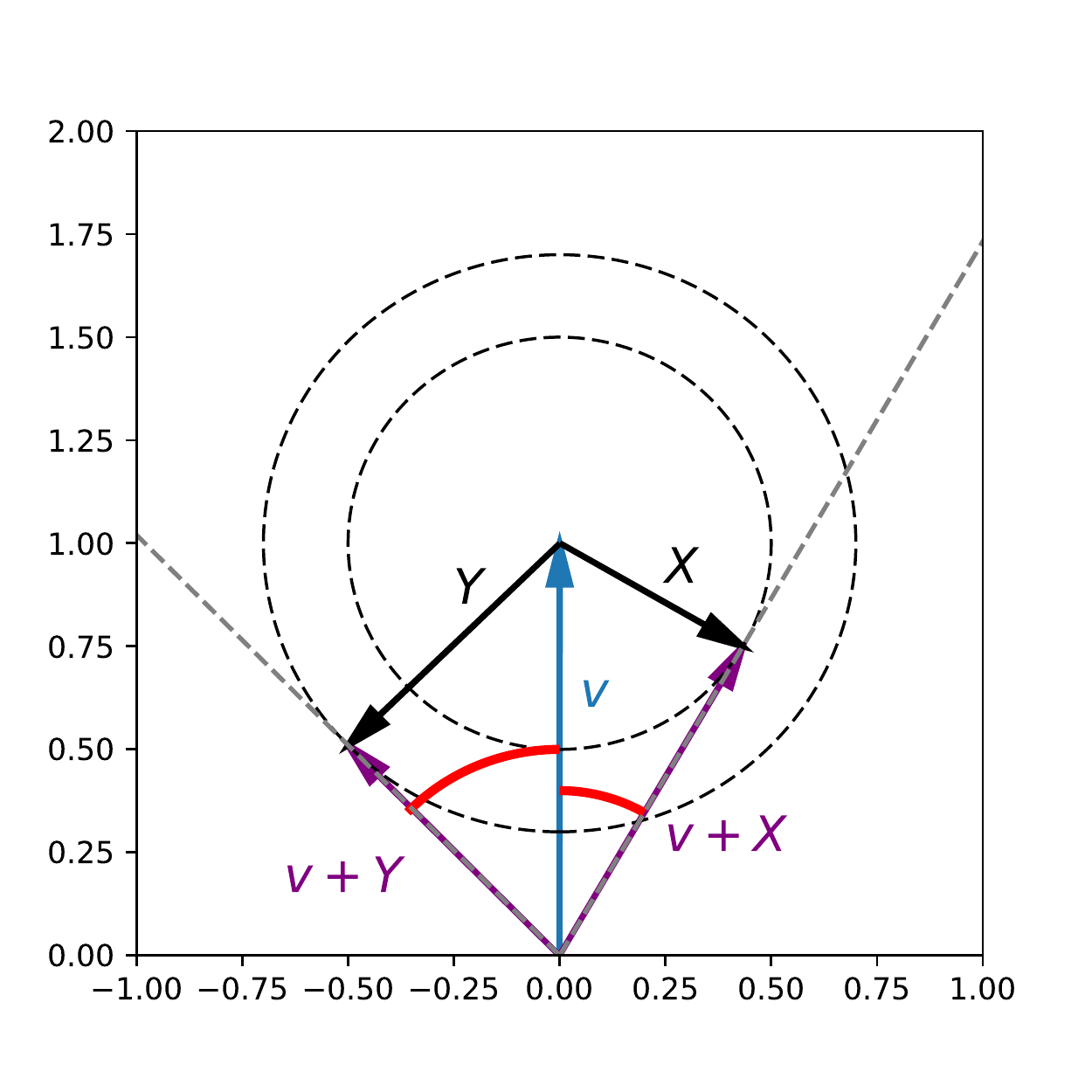}
\caption{We are interested in a configuration for which the angle between $v+X$ and $v+Y$ is maximized (red in the plot). As $\|X\|<\|v\|$ and $\|Y\|<\|v\|$ this is exactly the case if the line $\{\beta(v+X)|\beta\in \mathbb{R}\}$ is tangential to the circle with center $v$ and radius $\|X\|$ and the line $\{\beta(v+Y)|\beta\in \mathbb{R}\}$ is tangential to the circle with center $v$ and radius $\|Y\|$.}
\label{fig:proof_maxangle}
\end{figure}

\begin{lemma}
\label{lemma:different}
Let $v,w, X, Y\in\mathbb{R}^d$ with $\|X\|<\|v\|$, $\|Y\|<\|w\|$ and define
\begin{align}
h(v,w,X,Y):=-\frac{\|X\|\|Y\|}{\|v\|^2}+\sqrt{1-\frac{\|X\|^2}{\|v\|^2}}\sqrt{1-\frac{\|Y\|^2}{\|v\|^2}}
\end{align}
If
\begin{align}
\label{eq:cond}
\frac{\langle v, w\rangle}{\|v\|\|w\|}\leq h(v,w,X,Y)
\end{align}
then it holds
\begin{align}
\alpha(v+X,w+Y)\leq & \alpha(v,w)h(v,w,X,Y)\\
&+\sqrt{1-\alpha(v,w)^2}\sqrt{1-h(v,w,X,Y)^2}
\end{align}
\end{lemma}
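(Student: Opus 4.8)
The plan is to recast the statement entirely in terms of angles and the geodesic metric on the unit sphere, where it becomes essentially trivial. Write $\beta := \sphericalangle(v,w) \in [0,\pi]$, so that $\alpha(v,w) = \cos\beta$ and $\sin\beta = \sqrt{1-\alpha(v,w)^2}$, and set $a := \arcsin(\|X\|/\|v\|)$ and $b := \arcsin(\|Y\|/\|w\|)$, both of which lie in $[0,\tfrac{\pi}{2})$ because $\|X\|<\|v\|$ and $\|Y\|<\|w\|$. The first step is to note that a perturbation of bounded norm rotates a vector by a bounded angle: precisely the tangency argument already used in the proof of Lemma \ref{lemma:same} (Figure \ref{fig:proof_maxangle}) gives $\sphericalangle(v,v+X)\le a$ and $\sphericalangle(w,w+Y)\le b$, the extreme case being attained when the line through $v+X$ is tangent to the sphere of radius $\|X\|$ centred at $v$. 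The second ingredient is the elementary identity $\cos(\arcsin x + \arcsin y) = -xy + \sqrt{1-x^2}\sqrt{1-y^2}$ recorded in Lemma \ref{lemma:same}, which identifies $h(v,w,X,Y) = \cos(a+b)$.

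Next I would apply the triangle inequality for the angular metric on the unit sphere twice:
\[
\sphericalangle(v+X, w+Y) \;\ge\; \sphericalangle(v+X, w) - \sphericalangle(w, w+Y) \;\ge\; \sphericalangle(v,w) - \sphericalangle(v,v+X) - \sphericalangle(w,w+Y) \;\ge\; \beta - a - b .
\]
Hypothesis \eqref{eq:cond} is exactly what makes the right-hand side nonnegative: since $\cos$ is strictly decreasing on $[0,\pi]$ and $h(v,w,X,Y) = \cos(a+b)$, the condition $\alpha(v,w)\le h(v,w,X,Y)$ reads $\cos\beta \le \cos(a+b)$, i.e. $\beta \ge a+b$. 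Hence $\sphericalangle(v+X,w+Y) \ge \beta - a - b \ge 0$, and both quantities lie in $[0,\pi]$, so applying $\cos$ (monotone decreasing there) yields $\alpha(v+X,w+Y) \le \cos(\beta - a - b)$.

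Finally I would expand $\cos(\beta - a - b) = \cos\beta\cos(a+b) + \sin\beta\sin(a+b)$ and substitute $\cos\beta = \alpha(v,w)$, $\sin\beta = \sqrt{1-\alpha(v,w)^2}$, $\cos(a+b) = h(v,w,X,Y)$, and $\sin(a+b) = \sqrt{1-h(v,w,X,Y)^2}$ — the last identity being legitimate because $a+b\in[0,\pi)$, so its sine is nonnegative — which reproduces the claimed inequality verbatim. The only delicate points are bookkeeping: verifying that every angle that appears lives in the range where $\cos$ is monotone and $\sin$ is nonnegative (so that $\sqrt{1-(\cdot)^2}$ is the correct expression for the sine of $\beta$ and of $a+b$), and writing out the two uses of the spherical triangle inequality carefully; there is no genuine analytic obstacle once the problem is phrased in terms of angles. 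As a sanity check, the bound is tight, attained when $v, w, X, Y$ are coplanar with $X$ and $Y$ rotating $v$ and $w$ toward one another at the tangency extreme.
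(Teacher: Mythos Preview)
Your proposal is correct and follows essentially the same approach as the paper: both arguments reduce to angles, bound the perturbation angles by $\arcsin(\|X\|/\|v\|)$ and $\arcsin(\|Y\|/\|w\|)$ via the tangency picture, subtract these from $\sphericalangle(v,w)$, use hypothesis \eqref{eq:cond} to ensure the difference is nonnegative, and then apply $\cos$ and the angle-subtraction identity. Your explicit invocation of the spherical triangle inequality is a cleaner formalization of what the paper does via the geometric worst-case argument (coplanarity with $X\perp v+X$, $Y\perp w+Y$ pointing toward each other), but the content is identical.
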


\begin{proof}
Analogously to the argument in Figure \ref{fig:proof_maxangle}, the angle between $v+X$ and $w+Y$ is minimized, when $v$, $w$, $X$ and $Y$ share a common 2-dimensional hyperplane, $X$ is orthogonal to $v+X$, $Y$ is orthogonal to $w+Y$, and $X$ and $Y$ point towards each other. The minimum possible angle is then given by
\begin{align}
\sphericalangle(v+X,w+Y) &= \sphericalangle(v,w)-\sphericalangle(v,v+X)-\sphericalangle(w,w+Y) \\
&\geq \max(0, \\&\cos^{-1}(\frac{\langle v, w\rangle}{\|v\|\|w\|})\\-&\sin^{-1}(\frac{\|X\|}{\|v\|})+\\-&\sin^{-1}(\frac{\|Y\|}{\|v\|}))
\end{align}  
which can be simplified to
\begin{align}
\sphericalangle(v+X,w+Y) \geq \max(0, &\cos^{-1}(\frac{\langle v, w\rangle}{\|v\|\|w\|})\\-\cos^{-1}(-\frac{\|X\|\|Y\|}{\|v\|^2}&+\sqrt{1-\frac{\|X\|^2}{\|v\|^2}}\sqrt{1-\frac{\|Y\|^2}{\|v\|^2}}))
\end{align}
Under condition \eqref{eq:cond} then second term in the maximum is greater than zero and we get
\begin{align}
\cos(\sphericalangle&(v+X,v+Y)) \\\leq \cos(&\cos^{-1}(\frac{\langle v, w\rangle}{\|v\|\|w\|})\\-&\cos^{-1}(-\frac{\|X\|\|Y\|}{\|v\|^2}+\sqrt{1-\frac{\|X\|^2}{\|v\|^2}}\sqrt{1-\frac{\|Y\|^2}{\|v\|^2}}))\\
\leq \cos(&\cos^{-1}(\alpha(v,w))-\cos^{-1}(h(v,w,X,Y)))
\end{align}
Since 
\begin{align}
\cos(\cos^{-1}(x)-\cos^{-1}(y))=xy+\sqrt{1-x^2}\sqrt{1-y^2}
\end{align}
the result follows after re-arranging terms.
\end{proof}

\begin{lemma}
\label{lemma:kcluster}
Let $v_1,..,v_k\in\mathbb{R}^d$, $d\geq 2$, $\gamma_1,..,\gamma_k\in\mathbb{R}_{>0}$ and
\begin{align}
\label{eq:sumtozero}
\sum_{i=1}^k \gamma_iv_i=0\in\mathbb{R}^d
\end{align}
then there exists a bi-partitioning of the vectors $c_1\cup c_2 = \{1,..,k\}$ such that

\begin{align}
\max_{i\in c_1, j\in c_2}  \alpha(v_i,v_j) \leq \cos(\frac{\pi}{k-1})
\end{align}
\end{lemma}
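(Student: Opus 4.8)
\textbf{Proof Proposal for Lemma \ref{lemma:kcluster}.}

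The plan is to reduce the problem to a statement about angular separation of points on a circle. First I would observe that the cosine similarity $\alpha(v_i, v_j)$ depends only on the directions of the vectors, so without loss of generality each $v_i$ may be normalized; the positive weights $\gamma_i$ are then the only thing carrying magnitude information, and the constraint $\sum_i \gamma_i v_i = 0$ says the origin lies in the interior (or on the boundary) of the convex hull of the directions $\{v_i\}$. The key geometric consequence I would extract is that the directions cannot all be confined to any open half-space: if they were, the positive combination $\sum_i \gamma_i v_i$ would have a strictly positive inner product with the half-space normal and could not vanish. Equivalently, for any unit vector $u$ there is some $v_i$ with $\langle u, v_i\rangle \le 0$.

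The second step is to pass to a two-dimensional picture. I would project the problem onto a suitable plane, or argue directly, to claim there exist directions among the $v_i$ that are ``spread out'' by at least $\tfrac{\pi}{k-1}$ in angle. The cleanest route I can see is the following ordering argument in the plane spanned by a well-chosen pair (or via a projection that preserves the relevant extremal angles as in Figure \ref{fig:proof_overview}): order the $k$ directions by angle around the circle as $0 \le \beta_1 \le \beta_2 \le \cdots \le \beta_k < 2\pi$, and consider the $k$ consecutive gaps $\beta_2 - \beta_1, \ldots, \beta_k - \beta_{k-1}, \beta_1 + 2\pi - \beta_k$, which sum to $2\pi$. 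Because the origin is in the convex hull, no single gap can be as large as $\pi$ (a gap $\ge \pi$ would place all points in a closed half-space with the origin not strictly inside in the right way — this is the case-boundary that needs care). Hence at most $k-1$ of these gaps are ``active'' in the sense that matters, and by pigeonhole one can pick a diameter (a splitting line through the origin) so that the points are divided into two nonempty groups $c_1, c_2$ with every point of $c_1$ at angular distance at least $\tfrac{\pi}{k-1}$ from every point of $c_2$ — or more precisely, so that the minimal cross-group angle is at least $\tfrac{\pi}{k-1}$. Since $\alpha(v_i,v_j) = \cos(\text{angle})$ and $\cos$ is decreasing on $[0,\pi]$, this gives $\max_{i\in c_1, j\in c_2}\alpha(v_i,v_j) \le \cos(\tfrac{\pi}{k-1})$.

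The honest gap in the sketch above is that in dimension $d > 2$ the pairwise angles do not all live in a common plane, so ``ordering by angle around a circle'' is not literally available, and the pigeonhole must be done more carefully — most likely by choosing the separating hyperplane to be the one orthogonal to a direction that realizes the extremal configuration, or by an inductive/extremal argument: among all bi-partitionings, take the one minimizing the cross-cluster similarity, and show that if this minimum exceeded $\cos(\tfrac{\pi}{k-1})$ then all $k$ vectors would fit into a cone of half-angle $< \tfrac{\pi}{2}$, contradicting the fact that $\sum_i\gamma_i v_i = 0$ forces them out of every half-space. Making the counting constant exactly $\tfrac{\pi}{k-1}$ rather than something weaker is where the work concentrates, and I expect that to be the main obstacle: one needs the tight extremal configuration, which is $k-1$ vectors equally spaced in a half-plane plus the constraint vector, to drive the bound. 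I would handle the boundary case (origin on the boundary of the hull, so some gap equals exactly $\pi$) separately, noting that then two antipodal directions already give $\alpha = -1 \le \cos(\tfrac{\pi}{k-1})$ and the split is immediate.
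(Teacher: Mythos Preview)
Your overall strategy matches the paper's: reduce to the plane and run a pigeonhole on angular gaps, using the half-space obstruction from $\sum_i \gamma_i v_i = 0$ to cap the largest gap at $\pi$. Two points need sharpening.

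First, in the $d=2$ step the split is \emph{not} by a diameter. A line through the origin meets the circle at two antipodal points, and vectors from opposite sides can sit arbitrarily close to the line near either crossing, so a single diameter does not directly control the minimal cross angle. The paper's mechanism is instead to take the \emph{two} largest consecutive gaps $G_1 \ge G_2$ and let $c_1, c_2$ be the two arcs they cut out of the circle; then any cross pair must straddle one of the two chosen gaps, so its angle is at least $\min(G_1,G_2)=G_2$. Since $G_1\le \pi$ (exactly your half-space argument), the remaining $k-1$ gaps sum to at least $\pi$, hence $G_2\ge \pi/(k-1)$. This is the precise form of your ``at most $k-1$ gaps are active'' intuition.

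Second, for $d>2$ your cone-contradiction alternative does not go through: a bound on the optimal \emph{cross}-cluster similarity says nothing about intra-cluster angles, so there is no reason all $k$ vectors would end up in a common half-space. The paper's reduction is the projection you allude to, but with a specific choice of plane: take an optimal bi-partition in $\mathbb{R}^d$, let $(i^*,j^*)$ be the cross pair realizing the minimal cross angle, and project orthogonally onto $\mathrm{span}(v_{i^*},v_{j^*})$. Linearity preserves $\sum_i \gamma_i Pv_i=0$, the angle $\sphericalangle(v_{i^*},v_{j^*})$ is preserved exactly since both vectors lie in the plane, and the paper argues the same partition remains optimal for the projected configuration --- so the $d=2$ bound applies directly to $\sphericalangle(v_{i^*},v_{j^*})$.
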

\begin{proof}

Lemma \ref{lemma:kcluster} can be equivalently stated as follows:

Let $v_1,..,v_k\in\mathbb{R}^d$, $d\geq 2$, $\gamma_1,..,\gamma_k\in\mathbb{R}_{>0}$ and
\begin{align}
\sum_{i=1}^k \gamma_iv_i=0\in\mathbb{R}^d
\end{align}
then there exists a bi-partitioning of the vectors $c_1\cup c_2 = \{1,..,k\}$ such that

\begin{align}
\min_{i\in c_1, j\in c_2}  \sphericalangle(v_i,v_j) \geq \frac{\pi}{k-1}
\end{align}

As the angle between two vectors is invariant under multiplication with positive scalars $\gamma>0$ we can assume w.l.o.g that $\gamma_i=1$ $i=1,..,k$.  

Let us first consider the case where $d=2$. Let $e_1\in\mathbb{R}^2$ be the first standard basis vector and assume w.l.o.g that the vectors $v_1,..,v_k$ are sorted w.r.t. their angular distance to $e_1$ (they are arranged circular as shows in Figure \ref{fig:proof_bipart}). As all vectors lie in the 2d plane, we know that the sum of the angles between all neighboring vectors has to be equal to $2\pi$.
\begin{align}
\label{eq:constantangle}
\sum_{i=1}^k \sphericalangle(v_i,v_{(i+1)\text{ mod }k}) = 2\pi
\end{align} 
Now let 
\begin{align}
i_1^*=\arg\max_{i\in\{1,..,k\}} \sphericalangle(v_i,v_{(i+1)\text{ mod }k})
\end{align}
and 
\begin{align}
i_2^*=\arg\max_{i\in\{1,..,k\}\setminus i_1^*} \sphericalangle(v_i,v_{(i+1)\text{ mod }k})
\end{align}
be the indices of the largest and second largest angles between neighboring vectors and define the following clusters:
\begin{align}
c_1&=\{i\text{ mod }k|i_1^*<i\leq i_2^*+k[i_2^*<i_1^*]\}\\
c_2&=\{i\text{ mod }k|i_2^*<i\leq i_1^*+k[i_2^*>i_1^*]\}\}
\end{align}
where $[x]=1$ if $x$ is true and $[x]=0$ is $x$ is false.
Then by construction the second largest angle $\sphericalangle(v_{i^*_2},v_{(i^*_2+1) \text{ mod }k})$ minimizes the angle between any two vectors from the two different clusters $c_1$, $c_2$ (see Figure \ref{fig:proof_bipart} for an illustration):
\begin{align}
\min_{i\in c_1, j\in c_2}  \sphericalangle(v_i,v_j) = \sphericalangle(v_{i^*_2},v_{(i^*_2+1) \text{ mod }k})
\end{align}
Hence in $d=2$ we can always find a partitioning $c_1, c_2$ s.t. the minimum angle between any two vectors from different clusters is greater or equal to the 2nd largest angle between neighboring vectors. This means the worst case configuration of vectors is one where the 2nd largest angle between neighboring vectors is minimized.
As the sum of all $k$ angles between neighboring vectors is constant according to \eqref{eq:constantangle}, this is exactly the case when the largest angle between neighboring vectors is maximized and all other $k-1$ angles are equal.

Assume now that the angle between two neighboring vectors is greater than $\pi$. That would mean that there exists a separating line $l$ which passes through the origin and all vectors $v_1,..,v_k$ lie on one side of that line. This however is impossible since $\sum_{l=1}^k v_l=0$. This means that the largest angle between neighboring vectors can not be greater than $\pi$. Hence in the worst-case scenario \begin{align}
\sphericalangle(v_{i^*_2},v_{(i^*_2+1) \text{ mod }k}) \geq \frac{2\pi-\pi}{k-1} = \frac{\pi}{k-1}.
\end{align}
This concludes the proof for $d=2$.

Now consider he case where $d>2$. Let $c_1, c_2$ be a clustering which maximizes the minimum angular distance between any two clients from different clusters. Let
\begin{align}
i^*, j^*=\arg\min_{i\in c_1, j\in c_2}\sphericalangle(v_i,v_j)
\end{align}
then $v_{i^*}$ and $v_{j^*}$ are the two vectors with minimal angular distance. Let $A=[v_{i^*},v_{j^*}]\in\mathbb{R}^{d,2}$ and consider now the projection matrix
\begin{align}
P = A(A^TA)^{-1}A^T
\end{align}
which projects all d-dimensional vectors onto the plane spanned by $v_{i^*}$ and $v_{j^*}$. 
Then be linearity of the projection we have 
\begin{align}
0=P0=P(\sum_{i=1}^k v_i)=\sum_{i=1}^k P(v_i)
\end{align}
Hence the projected vectors also satisfy the condition of the Lemma. As 
\begin{align}
\sphericalangle(Pv_{i^*},Pv_{j^*})=\sphericalangle(v_{i^*},v_{j^*})
\end{align}
and 
\begin{align}
\sphericalangle(Pv_{i},Pv_{j})\geq\sphericalangle(v_{i},v_{j})
\end{align}
for all $i,j\notin\{i^*, j^*\}$ the clustering $c_1, c_2$ is still optimal after projecting and we have found a 2d configuration of vectors satisfying the assumptions of Lemma \ref{lemma:kcluster} with the same minimal cross-cluster angle. In other words, we have reduced the $d>2$ case to the $d=2$ case, for which we have already proven the result. This concludes the proof.

\end{proof}

\begin{figure}
\centering
\includegraphics[width=0.5\textwidth]{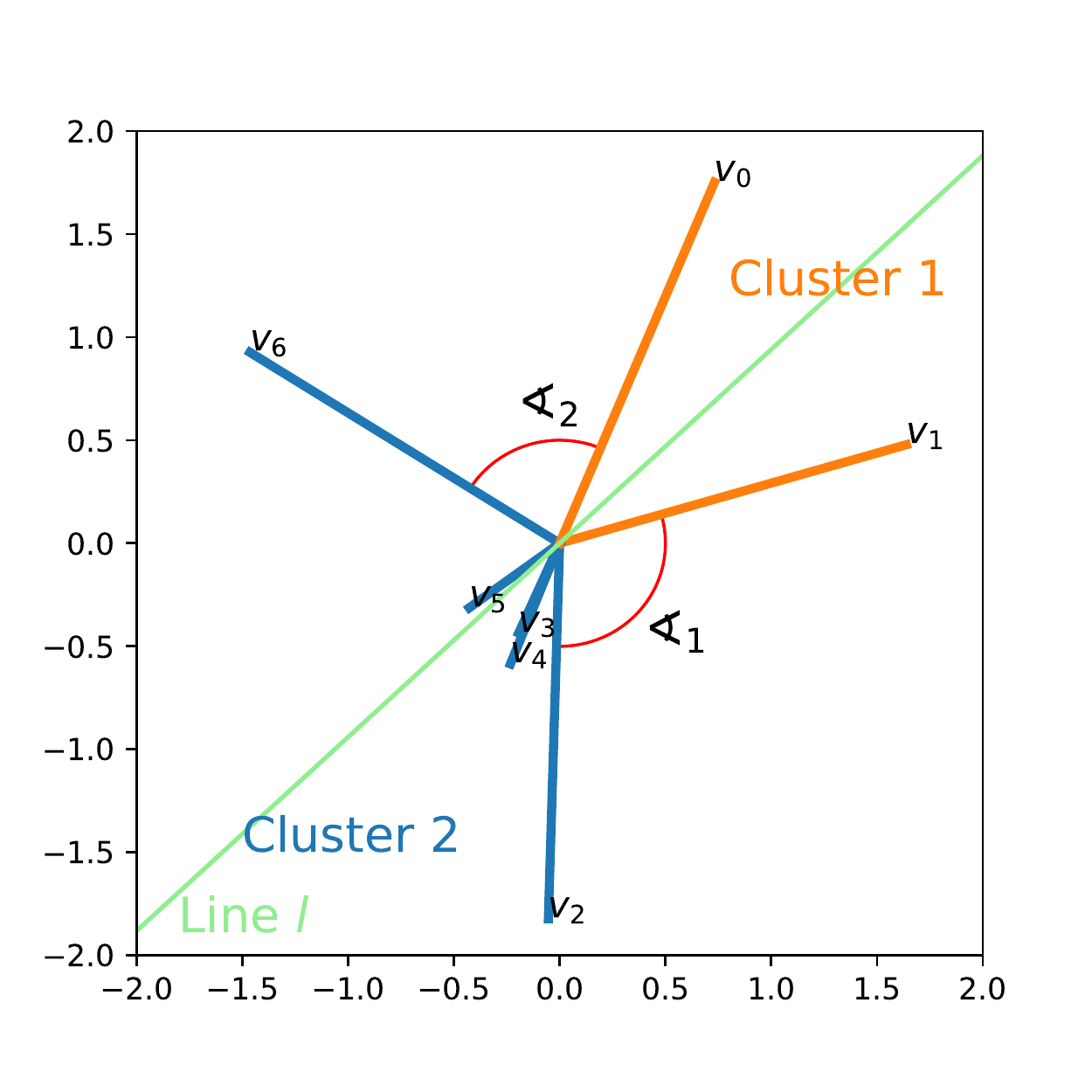}
\caption{Possible configuration in $d=2$. The largest and 2nd largest angle between neighboring vectors (red) separate the two optimal clusters. The largest angle between neighboring vectors is never greater than $\pi$.}
\label{fig:proof_bipart}
\end{figure}
\begin{theorem}[Separation Theorem]
Let $D_1,..,D_m$ be the local training data of $m$ different clients, each dataset sampled from one of $k$ different data generating distributions $\varphi_1,..,\varphi_k$, such that $D_i\sim\varphi_{I(i)}(x,y)$. Let the empirical risk on every client approximate the true risk at every stationary solution of the Federated Learning objective $\theta^*$ s.t.
\begin{align}
\|\nabla R_{I(i)}(\theta^*)\| > \|\nabla R_{I(i)}(\theta^*)-\nabla r_i(\theta^*)\|
\end{align} 
and define 
\begin{align}
\gamma_i:=\frac{\|\nabla R_{I(i)}(\theta^*)-\nabla r_i(\theta^*)\|}{\|\nabla R_{I(i)}(\theta^*)\|}\in[0,1)
\end{align}
Then there exists a bi-partitioning $c_1^*\cup c_2^*=\{1,..,m\}$ of the client population such that that the maximum similarity between the updates from any two clients from different clusters can be bounded from above according to
\begin{align}
&\alpha_{cross}^{max} := \min_{c_1\cup c_2=\{1,..,m\}}\max_{i\in c_1,j\in c_2}\alpha(\nabla r_i(\theta^*), \nabla r_j(\theta^*))\\
&=\max_{i\in c_1^*,j\in c_2^*}\alpha(\nabla r_i(\theta^*), \nabla r_j(\theta^*))\\
&\leq
\begin{cases}
 \cos(\frac{\pi}{k-1})H_{i,j}+\sin(\frac{\pi}{k-1})\sqrt{1-H_{i,j}^2} & \text{ if }H\geq \cos(\frac{\pi}{k-1}) \\
 1 & \text{ else }
\end{cases}
\end{align}
with
\begin{align}
H_{i,j} = -\gamma_i\gamma_j+\sqrt{1-\gamma_i^2}\sqrt{1-\gamma_j^2}\in(-1,1].
\end{align}
At the same time the similarity between updates from clients which share the same data generating distribution can be bounded from below by
\begin{align}
\alpha_{intra}^{min} := \min_{\underset{I(i)=I(j)}{i,j}}\alpha(\nabla_\theta r_i(\theta^*), \nabla_\theta r_j(\theta^*))\geq \min_{\underset{I(i)=I(j)}{i,j}}H_{i,j}.
\end{align}
\end{theorem}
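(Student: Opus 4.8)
\emph{Proof plan.} Following the Supplement's notation I would write $v_l=\nabla_\theta R_l(\theta^*)$ for $l=1,\dots,k$ and $X_i=\nabla_\theta r_i(\theta^*)-v_{I(i)}$, so that $\nabla_\theta r_i(\theta^*)=v_{I(i)}+X_i$, the hypothesis $\|\nabla R_{I(i)}(\theta^*)\|>\|\nabla R_{I(i)}(\theta^*)-\nabla r_i(\theta^*)\|$ reads $\|X_i\|<\|v_{I(i)}\|$, and $\gamma_i=\|X_i\|/\|v_{I(i)}\|\in[0,1)$. With this identification $H_{i,j}$ is precisely the quantity $h$ of Lemmas~\ref{lemma:same}--\ref{lemma:different} evaluated at these norms. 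The two displayed inequalities are then handled separately. For the intra-cluster bound \eqref{eq:res2}, fix $i,j$ with $I(i)=I(j)=l$; then $\nabla_\theta r_i(\theta^*)$ and $\nabla_\theta r_j(\theta^*)$ are two perturbations $v_l+X_i$, $v_l+X_j$ of the \emph{same} vector with $\|X_i\|,\|X_j\|<\|v_l\|$, so Lemma~\ref{lemma:same} applied with $v=v_l$, $X=X_i$, $Y=X_j$ gives $\alpha(\nabla_\theta r_i(\theta^*),\nabla_\theta r_j(\theta^*))\geq -\gamma_i\gamma_j+\sqrt{1-\gamma_i^2}\sqrt{1-\gamma_j^2}=H_{i,j}$. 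Taking the minimum over all same-distribution pairs yields $\alpha_{intra}^{min}\geq\min_{I(i)=I(j)}H_{i,j}$, which is \eqref{eq:res2}.

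For the cross-cluster bound \eqref{eq:res1} I would proceed in two stages. \emph{Stage 1 (geometry of the true gradients).} At a stationary point of the Federated Learning objective the positively-weighted true-risk gradients vanish, $\sum_{l=1}^{k}a_l v_l=0$ with $a_l>0$ (this is \eqref{eq:equilib} in the general setting of Section~\ref{sec:theory}). Since angles are invariant under scaling by positive constants, Lemma~\ref{lemma:kcluster} applies to $v_1,\dots,v_k$ and produces a bi-partitioning $K_1\,\dot{\cup}\,K_2=\{1,\dots,k\}$ of the \emph{distributions} with $\max_{l\in K_1,\,l'\in K_2}\alpha(v_l,v_{l'})\leq\cos(\tfrac{\pi}{k-1})$. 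Lifting this to the clients via $\tilde c_1=\{i:I(i)\in K_1\}$, $\tilde c_2=\{i:I(i)\in K_2\}$ gives a candidate partition of $\{1,\dots,m\}$.

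\emph{Stage 2 (perturbing across two different vectors).} Take a cross pair $i\in\tilde c_1$, $j\in\tilde c_2$ and set $l=I(i)$, $l'=I(j)$, so $\alpha(v_l,v_{l'})\leq\cos(\tfrac{\pi}{k-1})$. If $H_{i,j}\geq\cos(\tfrac{\pi}{k-1})$ then $\alpha(v_l,v_{l'})\leq\cos(\tfrac{\pi}{k-1})\leq H_{i,j}$, so precondition \eqref{eq:cond} of Lemma~\ref{lemma:different} holds and the lemma yields $\alpha(\nabla_\theta r_i(\theta^*),\nabla_\theta r_j(\theta^*))\leq g(\alpha(v_l,v_{l'}))$, where $g(t):=t\,H_{i,j}+\sqrt{1-t^2}\sqrt{1-H_{i,j}^2}=\cos(\arccos t-\arccos H_{i,j})$. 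A short computation (differentiate, or use the $\arccos$ form) shows $g$ is nondecreasing on $[-1,H_{i,j}]$; since $\alpha(v_l,v_{l'})\leq\cos(\tfrac{\pi}{k-1})\leq H_{i,j}$ we may replace the argument by $\cos(\tfrac{\pi}{k-1})$, obtaining $\alpha(\nabla_\theta r_i(\theta^*),\nabla_\theta r_j(\theta^*))\leq\cos(\tfrac{\pi}{k-1})H_{i,j}+\sin(\tfrac{\pi}{k-1})\sqrt{1-H_{i,j}^2}$ (here $\sin(\tfrac{\pi}{k-1})\geq 0$ because $\tfrac{\pi}{k-1}\in(0,\pi]$). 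If instead $H_{i,j}<\cos(\tfrac{\pi}{k-1})$, the trivial bound $\alpha\leq 1$ is used, giving the ``else'' branch. Finally, because $\alpha_{cross}^{max}$ is the \emph{minimum} over all bi-partitionings of the cross-cluster maximum, any optimal partition $c_1^*,c_2^*$ (one exists, there being finitely many) satisfies $\max_{i\in c_1^*,j\in c_2^*}\alpha\leq\max_{i\in\tilde c_1,j\in\tilde c_2}\alpha$, and chaining with the per-pair estimates above gives \eqref{eq:res1}.

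\emph{Main obstacle.} The monotonicity of $g$ and the bookkeeping around precondition \eqref{eq:cond} are routine; the delicate point is Stage~1 --- arguing that stationarity of $\theta^*$ for the aggregated objective really delivers a zero positively-weighted combination of the \emph{true} gradients $v_l$ so that Lemma~\ref{lemma:kcluster} is applicable, and that the single distribution-level partition it returns can be reused uniformly for every client pair without the $H_{i,j}$-dependence of the bound causing trouble. Everything else is a direct substitution into Lemmas~\ref{lemma:same}, \ref{lemma:different} and \ref{lemma:kcluster}.
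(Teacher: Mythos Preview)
Your proposal is correct and follows essentially the same route as the paper's own proof: the intra bound is Lemma~\ref{lemma:same} applied verbatim, and the cross bound is obtained by first invoking Lemma~\ref{lemma:kcluster} on the true-risk gradients at the stationary point to produce a distribution-level bipartition, lifting it to clients, and then feeding each cross pair into Lemma~\ref{lemma:different}. The only difference is that you make explicit the monotonicity of $g(t)=tH_{i,j}+\sqrt{1-t^2}\sqrt{1-H_{i,j}^2}$ on $[-1,H_{i,j}]$ needed to replace $\alpha(v_l,v_{l'})$ by $\cos(\tfrac{\pi}{k-1})$, which the paper simply absorbs into ``the result follows directly from Lemma~\ref{lemma:different}''; and you correctly flag as the delicate step the passage from stationarity of the \emph{empirical} FL objective to $\sum_l a_l\nabla R_l(\theta^*)=0$, which the paper asserts without further comment.
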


\begin{proof}
For the first result, we know that in every stationary solution of the Federated Learning objective  $\theta^*$ it holds 
\begin{align}
\sum_{l=1}^k \gamma_i\nabla_\theta R_l(\theta^*) = 0
\end{align} 
and hence by Lemma \ref{lemma:kcluster} there exists a bi-partitioning $\hat{c}_1\cup \hat{c}_2 = \{1,..,k\}$ such that 
\begin{align}
\max_{l\in \hat{c}_1, j\in \hat{c}_2}  \alpha(\nabla_{\theta} R_l(\theta^*),\nabla_{\theta} R_j(\theta^*)) \leq \cos(\frac{\pi}{k-1})
\end{align}
Let 
\begin{align}
c_1=\{i|I(i)\in\hat{c}_1, i=1,..,m\}
\end{align}
and 
\begin{align}
c_2=\{i|I(i)\in\hat{c}_2, i=1,..,m\}
\end{align}
and set for some $i\in c_1$ and $j\in c_2$:
\begin{align}
v=\nabla_\theta R_{I(i)}(\theta^*)
\end{align}
\begin{align}
X=\nabla_\theta r_i(\theta^*)-\nabla_\theta R_{I(i)}(\theta^*)
\end{align}
\begin{align}
w=\nabla_\theta R_{I(j)}(\theta^*)
\end{align}
\begin{align}
Y=\nabla_\theta r_j(\theta^*)-\nabla_\theta R_{I(j)}(\theta^*)
\end{align}
Then $\alpha(v,w)\leq\cos(\frac{\pi}{k-1})$ and the result follows directly from Lemma \ref{lemma:different}. 

The second result \eqref{eq:res2} follows directly from Lemma \ref{lemma:same} by setting 
\begin{align}
v=\nabla_\theta R_{I(i)}(\theta^*)
\end{align}
\begin{align}
X=\nabla_\theta r_i(\theta^*)-\nabla_\theta R_{I(i)}(\theta^*)
\end{align}
\begin{align}
Y=\nabla_\theta r_j(\theta^*)-\nabla_\theta R_{I(i)}(\theta^*)
\end{align}
\end{proof}

\end{document}